\newcommand{\half}{\frac{1}{2}}
\newcommand{\Abs}[1]{\left| #1 \right|}
\newcommand{\Set}[1]{\left\{ #1 \right\}}
\newcommand{\Brack}[1]{\left( #1 \right)}
\newcommand{\mc}[1]{\mathcal{#1}}
\theoremstyle{plain}
\newtheorem{theorem}{Theorem}[section]
\newtheorem{proposition}[theorem]{Proposition}
\newtheorem{lemma}[theorem]{Lemma}
\theoremstyle{definition}
\theoremstyle{remark}
\icmltitlerunning{Learning Hidden Markov Models When the Locations of Missing Observations are Unknown}
\begin{document}

\twocolumn[
\icmltitle{Learning Hidden Markov Models When the Locations of Missing Observations are Unknown}

\icmlsetsymbol{equal}{*}

\begin{icmlauthorlist}
\icmlauthor{Binyamin Perets}{equal,Technion}
\icmlauthor{Mark Kozdoba}{equal,Technion}
\icmlauthor{Shie Mannor}{Technion}

\end{icmlauthorlist}

\icmlaffiliation{Technion}{Technion Israel Institute of Technology, Haifa, Israel}

\icmlcorrespondingauthor{Binyamin Perets}{sbp67250@campus.technion.ac.il
}
\icmlkeywords{Hidden Markov Models, Missing Observations , Machine Learning, ICML, Statistics, Gibbs sampling, Sequential data analysis, Time series, Computational biology, Silent states HMM, Profile HMM}
\vskip 0.3in
]
\printAffiliationsAndNotice{\icmlEqualContribution} 
\begin{abstract}
The Hidden Markov Model (HMM) is one of the most widely used statistical models for sequential data analysis. One of the key reasons for this versatility is the ability of HMM to deal with missing data. However, standard HMM learning algorithms rely crucially on the assumption that the positions of the missing observations \emph{within the observation sequence} are known. In the natural sciences, where this assumption is often violated, special variants of HMM, commonly known as Silent-state HMMs (SHMMs), are used. Despite their widespread use, these algorithms strongly rely on specific structural assumptions of the underlying chain, such as acyclicity, thus limiting the applicability of these methods. Moreover, even in the acyclic case, it has been shown that these methods can lead to poor reconstruction. 
In this paper we consider the general problem of learning an HMM from data with unknown missing observation locations. We provide reconstruction algorithms that do not require any assumptions about the structure of the underlying chain, and can also be used with limited prior knowledge, unlike SHMM.
We evaluate and compare the algorithms in a variety of scenarios, measuring their reconstruction precision, and robustness under model miss-specification. Notably, we show that under proper specifications one can reconstruct the process dynamics as well as if the missing observations positions were known.
\end{abstract}

\section{Introduction}
Hidden Markov Models (HMMs),\cite{murphy2012machine}, are a well established and widely used method for modeling sequential data,  with applications in a variety of fields, such as speech recognition \cite{yu2016automatic} and economical time series \citep{hamilton2020time,kaufmann2019hidden}, to name a few. 
In all of the above applications, it is common to have \emph{missing observations}. That is, we assume that a certain system, after $N$ time steps, produces a full sequence of observations $U_i = (u_1, \ldots, u_N)$, but there is a \emph{known} subset of times,  $1\leq t_1, \ldots, t_m \leq N$, $m \leq N$, such that the observation values at these times, $\Set{u_{t_j}}_{j=1}^m$, are not known to us. This situation is illustrated in the first and second lines of Figure \ref{fig:illProcess}. We refer to this type of missing values as missing values with \emph{known locations} (or \emph{known gaps}, Interchangeably), since we know which of the values are missing.

\begin{figure}[h]
    \centering
    \begin{subfigure}[b]{0.42\textwidth}
        \includegraphics[scale=0.38]{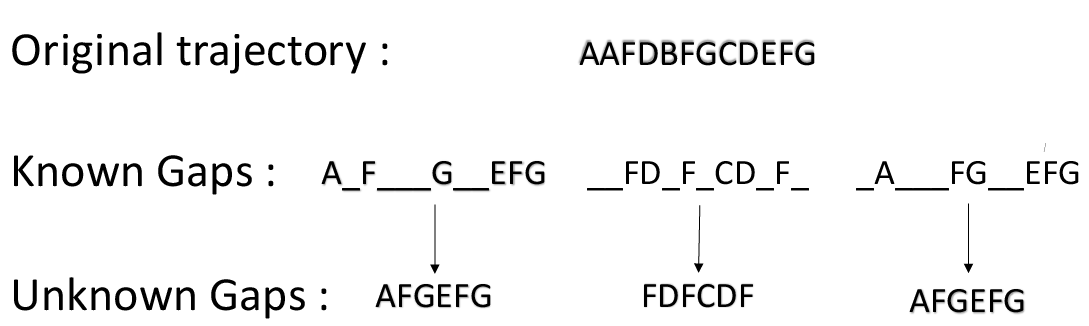}
        \vspace*{6mm}
        \caption{Omission Process Illustration}
        \label{fig:illProcess}
    \end{subfigure}
    \hspace{4pc}
    \begin{subfigure}[b]{0.45\textwidth}
        \includegraphics[scale=0.35]{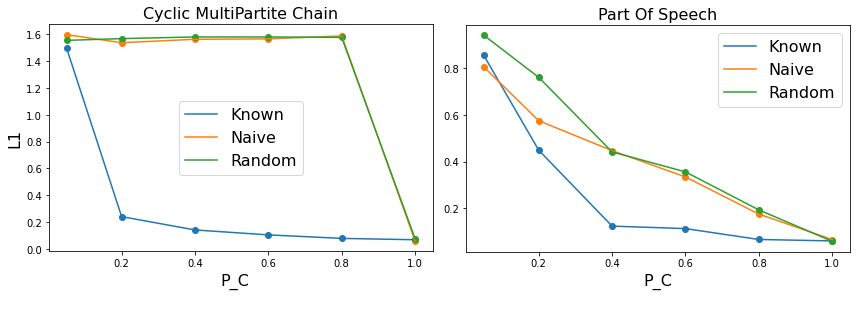}
        \caption{Ignoring the Gaps}
        \label{fig:NaiveIntro}
    \end{subfigure}
    \caption{
\textbf{(a)} The gaps location may alter the sequence considerably.
\textbf{(b)} The \emph{Naive} approach may perform poorly, highlighting the need for a more careful treatment of gaps. (s.d $<$ .062)}
    \label{fig:Intro}
\end{figure}

\subsection{Motivation and Previous Works} \label{Intro:part2}
It is commonly assumed that observations are provided with time stamps, and that a single time unit corresponds to a step of the underlying chain. Such assumptions naturally imply that when there are missing values their location is known, however, this assumption do not hold for many applications. For example consider the case of irregular time series \cite{IrrTimeSe}, also known as unevenly spaced time series, where the system is observed at varying time intervals that do not correspond to the steps of the underlying chain. For example, in the context of cyber-attack detection, it is often assumed that the attacker sequence of actions can be modeled as a Markov process \cite{Chadza2020-mz}. However, in practice, the observations of these malicious activities are irregular and incomplete. Another scenario is when the sampling clock is different from the underlying process clock. An example of this is the monitoring of sepsis patients, as the disease can progress in an unpredictable manner, resulting in inconsistent and possibly incomplete measurements of vital signs \cite{pmlr-v106-moor19a}.

One domain where missing observations in unknown locations is frequently encountered is computational biology, where data is often only ordered and lacks time stamps\citep{pmid29107741,liu2017reconstructing,orr2018learning}.
For that reason, Silent-state HMM were developed. In SHMM, some states are not associated with any observation, and the model can transition between these states without emitting any observation. These states are referred to as "silent" states. Notice that silent states are incorporated explicitly into the assumed chain structure. 
For example, consider the wide use of Profile hidden Markov models \cite{10.1093/bioinformatics/14.9.755} for DNA sequence searching, where the goal is to find optimal matches between a model input and a database of sequences \citep{HMMER1,Wheeler2013}. In this context, PHMMs are used to match the input model with sequences in the database that are assumed to come from the same model. However, it is also assumed that some states (nucleotides) may have been "deleted" through the course of evolution and are therefore not observable, or "silent". 

Despite their potential usefulness and popularity, the uses of PHMMs and SHMMs are limited to a small set of problems. Currently, the focus of PHMM use is on inference, where it is assumed that the transition matrix and omitting probabilities are known (known as the profile HMM with known profiles).
However, there is a significant need for many applications of SHMM to be able to directly learn the model parameters from data\citep{Wheeler2013,setty2019characterization, ye2019circular,orr2018learning,Pattabiraman2021-ol}. There are currently several limitations that have hindered this goal.
\textbf{First}, while the use of Expectation Maximization algorithms for learning the parameters of SHMM has been proposed, it has not yet been thoroughly studied and has not been shown to be effective in practice. In addition, it has been suggested in recent years that PHMMs are not identifiable \cite{Pattabiraman2021-ol}. In this paper, we address this limitation and examine the reasons for the failure of these algorithms, providing a new perspective on the identifiability issue. \textbf{Second}, while partial knowledge that is crucial for learning is often available, current frameworks are not flexible enough to incorporate this information.
\textbf{Most importantly}, SHMMs assume that the underlying chain is acyclic (i.e the underline chain forms a DAG), which is frequently violated in the field of computational biology \citep{ye2019circular,deconinck2021recent,LummertzdaRocha2018}. This assumption can be particularly problematic in areas such as pseudo-time analysis \cite{ye2019circular} in a variety of biological systems, \cite{campbell2018uncovering}, with particularly extensive applications in single-cell trajectory analysis which is crucial in understanding the dynamics of a cell (see for instance \citep{pmid29107741, van2020trajectory} and a survey \cite{deconinck2021recent}). 
In this type of applications, one is given a \emph{set} of observations, usually corresponding to information about the state of a cell. However, due to the nature of the data collection process, it is often impossible to obtain timestamps for each individual observation.

In the interest of providing a comprehensive overview, we would like to highlight the following previous works related to this topic: In \cite{IrrTimeSe}, the authors formally described the concept of irregular time series in HMMs, but did not propose an algorithm for reconstruction. \cite{NIPS2013_32b30a25} dealt with the sub-problem of flow network analysis where the positions of a few observed nodes are fixed and known in advance. The effort closest to our own, as far as we are aware, is the work of \cite{orr2018learning}, in which the authors developed an Expectation Maximization-based algorithm for reconstructing HMMs from data with similar properties, but under the restriction of acyclic chains and a strict assumption about the underlying Markov process.

\subsection{Missing Observations In Unknown Locations and The Need For Special Methods}
\label{Intro:MissingObsIntro}
Let $O= \Brack{o_1, \ldots, o_k}$ be an \emph{order} set of observations, that is, we assume that a system passed through some sequence of states $S = \Brack{s_1, \ldots, s_N}$, \emph{some} of which have generated the observations $O$. This situation is illustrated in the third line of Figure \ref{fig:illProcess}, where different subsequences correspond to different possible values of $O$. Note in particular that the same $O$ can result from different deletions of the full observation sequence. In this situation, we do not apriori know the positioning of the observations $O$ (and, equivalently, of the observations missing from $O$) with respect to the state sequence $S$, and therefore we refer to such observations as observations with \emph{unknown gaps}.
The goal of the reconstruction (or learning) algorithm is to recover the underlying dynamics of the state sequence $S$ from the observations $O$. 

It is natural to ask whether one indeed needs to treat the unknown gaps situation specially. For example, a possible approach to reconstruction, termed \emph{``Naive"} in this paper, is to simply ignore the missing observations and to reconstruct from sequences $O$ as if these were the full observation sequences $U$. See for instance \citep{liu2017reconstructing,setty2019characterization}. We now show that the \emph{Naive} method can indeed introduce extremely large errors into estimation. First, we denote $\Psi(s)$ to be the probability of observation emitting from state $s$ to be omitted (see Section \ref{sec:background}). For a fixed benchmark HMM and a fixed $\Psi(s)= 1-p_c : \forall s$, we generate synthetic observation trajectories with gaps based on $p_c$, and compare three reconstruction methods: (a) The \emph{Naive} approach, where we ignore the missing observations, (b) The Random approach, where we use the standard reconstruction algorithm with \emph{known} gaps, but the gap locations given to the algorithm are chosen at random, and (c) The ``Known'' reconstruction, where the gap locations provided to the algorithm are the \emph{true} gap locations. This algorithm is used as an \textbf{ideal benchmark}, as its performance is the best one can hope for, for any unknown gaps algorithm.
The experiment is repeated for different benchmark HMM (see Section \ref{sec:experiments} for full details), and for a range of values of $p_c$. For each run, we measure the quality of the reconstruction by the $L_1$ distance between the reconstructed and the ground truth transition matrices.

As shown in Figure \ref{fig:NaiveIntro}, for intermediate values such as $p_c = 0.5$ the performance of the \emph{Naive} and random approaches are poor, and for some (MultiPartite HMM) this is especially pronounced, with error of $1.6$ even at $p_c = 0.8$ (the maximal error of $L_1$ is $2$).
This experiment emphasizes the need for reconstruction algorithms that model the unknown gap locations much more carefully than the \emph{Naive} and ``random'' methods. Appendix \ref{Supp:wrognW} contains additional benchmark HMMs.

\subsection{Paper Summary and Contributions}\label{sec:SummeryContribution}
This paper aims to propose a comprehensive solution for learning HMMs in scenarios where observations are missing in unknown locations. Those solutions do not impose restrictive assumptions on the underlying chain, with the exception of the "Naive" method, which is shown to be ineffective. We demonstrate the effectiveness of our approach through various methods, which correspond to varying levels of prior knowledge about the process. These methods enable reconstruction in cases where prior knowledge is partially known, which is not possible with previous methods. The rest of the paper is organized as follows:

\textbf{Model definition}. Section \ref{sec:background} defines the HMMOP (HMM with Omission Process) model. In this model, a set $U$ of full sequences of observations  $U_i = (u_{i1},\ldots,u_{iN})$ is generated by a standard HMM. Then, the sequence $O_i$ is formed by deleting (equivalently,  omitting) entries $u_{ij} \in U_i$ at random with a probability $\Psi$ which depends on the state, independently of the other $u_{i'j'}$. This type of state dependent omitting process in known as \textbf{Non- Ignorable Omitting Process}. Section \ref{sec:nonign} examines the non-ignorable setup for missing observations, while Section \ref{ssec:noiid} emphasizes the significance of taking $\Psi(\cdot)$ into account for accurate reconstruction.\\
\textbf{Analytical Analysis}.
Section \ref{sec:analytic} presents an analytical analysis for the case when the probability of missing observations is constant across all states, $\Psi(s) = 1-p_c \forall s$. This analysis highlights the limitations of previous attempts to reconstruct HMMs with missing values using PHMMs.\\
\textbf{Reconstruction methods}. Section \ref{sec:gibbs} introduces two approaches to reconstructing HMMOPs. One approach assumes that for the ignorable setup, only the observed trajectories $\mathcal{O} = \Set{O_i}$ and the length of the original sequence are known. The second approach assumes that only the omitting probabilities ($\Psi$) are known. Additionally, we present a method for imputing $\Psi$ when only partial knowledge of it is available.\\
\textbf{Experiments and evaluations}. Section \ref{sec:experiments} evaluates the unknown gaps reconstruction algorithms on a number of synthetic and semi-synthetic benchmarks, where it is possible to compare the reconstructed transition matrices to the ground truth matrices that generated the data. 
The experiments are performed for a range of $\Psi$, allowing us to study the effect of different percentages of missing observations on the performance. It is worth noting that given $\Psi$ is known, our reconstruction method (term ``Gaps sampler") \textbf{match the performance of an ideal benchmark for which the locations are known}. This result is particularly remarkable, as it demonstrates that our method is able to achieve the best possible performance. Indeed, observe that the reconstruction algorithm only takes $\Psi$ (or a single number $p_c$ for the ignorable case), as information about missing samples. Given an observation sequence $O_i$, there is an exponential number of possible placings of gaps within that sequence, and we know from the experiment described in Figure \ref{fig:NaiveIntro} that a random placement of gaps yield poor results. Yet, the algorithms manage either to get close to, or match the performance of the sampler that knows the locations of the missing values. To the best of our knowledge, this is the first time such results have been achieved. \\ 
\textbf{Experiments and evaluations under miss-specifications}. In Section \ref{sec:pcMiss}, we evaluate the robustness of the algorithms with respect to various \emph{model miss-specifications}. First we study the effect of providing a perturbed $\Psi$ to the algorithm. Second, we study the ability to reconstruct when $\Psi$ is unknown for some states. Third, we evaluate other miss-specifications w.r.t the omitting process: We tested the performance for a non-constant $\Psi$ (that is, changing per sentence) and a scenario where the omitting process is a Markov process. For all of the above miss-specifications we show that the results are quite stable. \\
\textbf{Code}. To the best of our knowledge, our implementation is the first publicly available Gibbs sampling-based HMM learning implementation for Python, and the first to handle non-ignorable missing observations in general. The code is provided in the supplementary material and will be made publicly available with the final version of the paper.

\section{Background and Model Definition}
\label{sec:background} 
A Hidden Markov Model  $(\mathbb{X},T,\Theta)$ is defined by a finite state space $\mathbb{X}$ and a transition probability matrix $T \in R^{\Abs{\mathbb{X}} \times \Abs{\mathbb{X}}}$ which defines a Markov chain on $\mathbb{X}$.  
The observations of the model take values in a set $\mc{E}$, and for every state $X \in \mathbb{X}$, $\Theta_X$ is a distribution on $\mc{E}$ corresponding to $X$. 
Let $X_i = [X_{i0},\ldots,X_{iN}]$ be a sequence of states, and let $U_i = [u_{i0},\ldots,u_{iK}]$ be a sequence of observations (also referred to as \emph{sentence}), 
$u_{ij} \in \mc{E}$, $K\leq N$.\\
The sequence $W_i$ encodes the locations of the non-missing observations. Define $W_i = [w_0,\ldots,w_K] $ as a sequence of indexes $w_k \in \{0,..,N\}$,$w_{k}<w_{k+1}$, such that $w_k \in W_i$ if observation $u_{ik}$
was generated at time $w_k$. 
We denote by $U_i^{W_i} = [u_{iw_0},\ldots,u_{iw_K}]$ the restriction of $U_i$ to $W_i$.

\textbf{HMMOP - Unknown Gaps Model Definition}.\label{ProblemSetup}\\
Let $M' = (\mathbb{X},T,\Theta)$ be an HMM. Define a random mapping $\Phi_{\Psi}(\cdot)$ of a full observation sequence $U_i$ to a partial observation sequence $O_i$ as follows: 
Given a full sequence of states, $X_i$, corresponding to an observation sequence, $U_i$, an indicator vector, $C_i = [c_{i0},\ldots,c_{iN}]$ is sampled where $c_{ij}$ are independent Bernoulli variables with success probability $\Psi(X_{ij})$. The set $W_i = \Set{j \vert c_{ij} = 1}$ is defined, and the partial observation sequence, $O_i$, is set as $O_i = \Phi_{\Psi}(U_i) := U_i^{W_i}$.
Then HMMOP is defined to be the generative model where a state sequence $X_i$ and a full observation sequences $U_i$ are generated by an HMM, and the corresponding observation sequences $O_i$ are then produced as $O_i = \Phi_{\Psi}(U_i)$. The joint distribution of the HMMOP is :
\begin{equation*}
\begin{split}
    &P(O_i, W_i, X_i|\Theta) = P(X_0|\Theta) P(O_0|X_0) \ \cdot\\ &\prod_{t=1}^{N} T_{X_t,X_{t-1}} \cdot \begin{cases}
    (1-\Psi(X_t)) \cdot P(O_t|X_t,\Theta) ,  \text{if}\ t \in W_i \\
  \Psi(X_t),  \text{otherwise}
\end{cases}
\end{split}
\end{equation*}

\subsection{Non Ignorable Omitting Process} \label{sec:nonign}
Non Ignorable omitting process in HMMs, also known as State-Dependent Missingness (SDM), was first introduced by \cite{doi:10.1080/03610918.2011.581778} and later developed by \cite{https://doi.org/10.48550/arxiv.2109.02770}.
Reconstruction with non-ignorable missing observations refers to the problem of reconstructing HMMs when the probability of an observation to be missing depends on the state of the system, in contrast to the ignorable case where $\Psi(s)$ is constant and simply translate to the percentage of missing observations in the data. Reconstruction of dynamical process with SDM (not necessarily HMMs) is considered a challenging problem and it is an active area of research. In this paper we generally consider the non-ignorable case and we carefully examine the impact of the SDM assumption on our reconstruction results. Notably, we show that we can still achieve ideal results for known omitting probabilities ($\Psi$) even in the SDM case. Section \ref{sec:experiments} presents comprehensive evaluation of the ignorable case.

\subsection{Analytical Analysis}\label{sec:analytic}
For analytical analysis we consider the case of ignorable missing observations, that is, $\Psi(s) = 1-p_c \forall s$. More, let us first discuss the \emph{non-hidden} case. That is, we assume for the moment that the observation given a state $X \in \mathbb{X}$ is the state $X$ itself. Note that the unknown gaps reconstruction is interesting even in this simpler case. \\

\begin{proposition}\label{MMM}
Let $M' = (\mathbb{X},T,\Theta)$ be an non-hidden Markov model. $\mathbb{I}$ the identity matrix. Then the sequence of observations $O_i = \Phi_{\Psi = p_c}(U_i)$ is a Markov chain with transition matrix :
\begin{equation}
\label{eq8}
T_r =   p_c \cdot T \cdot [\mathbb{I} - (1-p_c) \cdot T]^{-1}.
\end{equation}
\end{proposition}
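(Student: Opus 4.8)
The plan is to exploit the simple structure of the non-hidden case: since the observation at time $t$ is the state $X_t$ itself, the sequence $O_i$ is nothing but the state chain $(X_t)$ sampled at the random set of retained times $W_i=[w_0,\dots,w_K]$. The crucial feature is that the retention indicators $c_{it}$ are i.i.d.\ $\mathrm{Bernoulli}(p_c)$ and, by construction, independent of the state trajectory. Consequently the ``gap lengths'' $g_n := w_n - w_{n-1}$ are i.i.d.\ geometric on $\{1,2,\dots\}$ with $\Prob{g_n = k} = (1-p_c)^{k-1}p_c$, and are independent of $(X_t)$.

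First I would establish that the $n$-th retained observation $O_n = X_{w_n}$ forms a Markov chain. One clean route is to consider the enlarged process $Y_t := (X_t, c_t)$, which is itself Markov (the $c_t$ component is conditionally independent of the past given $X_t$ and does not influence the dynamics of $X$). The indices $w_0 < w_1 < \cdots$ are stopping times for the natural filtration $\mathcal{F}_t$ of $Y$, and applying the strong Markov property at $w_n$ shows that the post-$w_n$ evolution of $Y$ is a fresh copy started from $(O_n,1)$; hence the conditional law of $O_{n+1}$ given $\mathcal{F}_{w_n}$ — in particular given $O_0,\dots,O_n$ together with all gap positions observed so far — depends only on $O_n$. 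Alternatively one can argue directly: expand $\Prob{O_{n+1}=b \mid O_0=a_0,\dots,O_n=a_n}$ by summing over all admissible patterns of retained/omitted coordinates, and use the independence of the $c$'s together with the Markov property of $(X_t)$ to factor the sum so that the dependence on $a_0,\dots,a_{n-1}$ cancels.

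Next I would compute the transition kernel. Given $O_n = a$, the next retained state is reached after $g_{n+1}$ chain steps, where $g_{n+1}$ is geometric as above and independent of the chain restarted at $a$. With the convention $T_{y,x} = \Prob{X_{t+1}=y \mid X_t=x}$ from the model definition (so that $(T^k)_{b,a}$ is the $k$-step transition probability from $a$ to $b$),
\[
(T_r)_{b,a} = \sum_{k=1}^{\infty} (1-p_c)^{k-1}p_c\,(T^k)_{b,a}
= \Big( p_c\,T \sum_{m=0}^{\infty} \big((1-p_c)T\big)^m \Big)_{b,a}.
\]
Since $T$ is stochastic its spectral radius is $1$, so $(1-p_c)T$ has spectral radius $1-p_c<1$; the Neumann series therefore converges, $\mathbb{I}-(1-p_c)T$ is invertible, and $\sum_{m\ge 0}((1-p_c)T)^m = [\mathbb{I}-(1-p_c)T]^{-1}$. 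Substituting yields $T_r = p_c\,T\,[\mathbb{I}-(1-p_c)T]^{-1}$, and one verifies that $T_r$ is again stochastic, as it must be.

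The real content lies in the Markov-property step: conditioning on earlier observations whose positions in the original timeline are themselves random and unobserved must not create extra dependence, and this is exactly where independence of the omission indicators from the state process is used — it is also precisely the place where the analogous statement fails in the \emph{hidden} case, which is why the rest of the paper needs heavier machinery. A secondary technical caveat is that for a finite horizon $N$ the last gap is truncated, so $O_i$ is strictly speaking time-inhomogeneous near the right boundary; the statement should be read either in the $N\to\infty$ limit or as describing the bulk transition law, and it is the Neumann-series identity that makes that passage clean.
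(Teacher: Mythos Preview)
Your proposal is correct and follows essentially the same route as the paper: identify the gap lengths as i.i.d.\ geometric, write $T_r=\sum_{k\ge 1}p_c(1-p_c)^{k-1}T^k$, and sum the Neumann series using a norm/spectral-radius bound on $(1-p_c)T$. You are in fact more careful than the paper, which does not explicitly verify the Markov property of the thinned sequence (your strong-Markov argument on $(X_t,c_t)$) and does not discuss the finite-horizon boundary effect; the paper's proof simply states the geometric waiting time and the resulting series.
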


\begin{proof}
By definition, in the non-hidden case the sequence $U_i$ coincides with the state sequence $X_i$, and $O_i$ is a sub-sequence of $U_i$. Note that the waiting time $d$ between two occurrences of $1$ in $C_i$ is geometrically distributed with mean $p_c$, which implies that 
\begin{equation}
\label{eq:T_r_sum}
T_r = \sum_{d=1}^{\infty} p_c (1-p_c)^{d-1} T^{d}.
\end{equation}
For every stochastic matrix $T$ we have $\|T\|\leq 1$, \cite{UpperBoundsStochastic}, where $\|\cdot\|$ is the operator norm, $\|(1-p_c) \cdot T\| < 1$, and thus the series in \eqref{eq:T_r_sum} are summable, with sum given by \eqref{eq8}.
\end{proof}
Now, note that given the partial observations $O_i$, we can directly learn the transition matrix $T_r$ for $O_i$ by counting the co-occurrences of the states in $O_i$. This is the instantiation of the \emph{Naive} approach. Next, observe that if $p_c$ is known, using \eqref{eq8} we can \emph{invert} the omission process to obtain $T$: 
\begin{equation} \label{eq:eqbackward}
    T =  [\mathbb{I} \cdot p_c + (1-p_c) \cdot T_{r}]^{-1} \cdot T_{r}.
\end{equation}
We refer to this as the \emph{backward transformation}.

As we can see, $T$ is a transition matrix of a Markov model for any $p_c \in (0, 1]$. Therefore, given a set of missing observations $O$, it is not possible to infer any "original"  T, but rather only a backward version of T that is conditioned on a specific value of $p_c$.
As Supplementary material \ref{suppFB} shows, the difference between reconstructed T for different $p_c$ might be significant.\\
 The current approach to reconstructing Profile-HMMs involves guessing a random $p_c$, then using an Expectation-Maximization (EM) procedure to learn $T$ given the guessed $p_c$, and vice-versa. However, as we demonstrated, guessing the $p_c$ is equivalent to randomly selecting a backward transformation of $T$, which has no practical meaning. This we believe explain the lack of successful PHMMs reconstruction studies, despite its importance.
Detailed evaluations of \eqref{eq8},\eqref{eq:eqbackward}, together with additional results and characteristics of the backward transformation, are given in Supplementary Material Section \ref{suppFB}

Note that the ``backward transformation" can be seen as a reconstruction method for the case of ignorable missing observations. \textbf{For the proper hidden case}, we can apply any standard HMM learning algorithm to learn $T_r$ using the \emph{Naive} approach from $\mathcal{O}$. Then, we can use \eqref{eq:eqbackward} in the same way as in the non-hidden case to obtain $T$. Since this approach requires some HMM learning, which is typically non-analytic, we refer to this approach as semi-analytic. We evaluate the results of this method in compare to the other methods in Evaluation Section \ref{sec:experiments}.

\section{Reconstruction Methods}
\label{sec:gibbs}
\begin{figure}[t]
    \centering
    \begin{subfigure}[b]{0.42\textwidth}
        \includegraphics[scale=0.28]{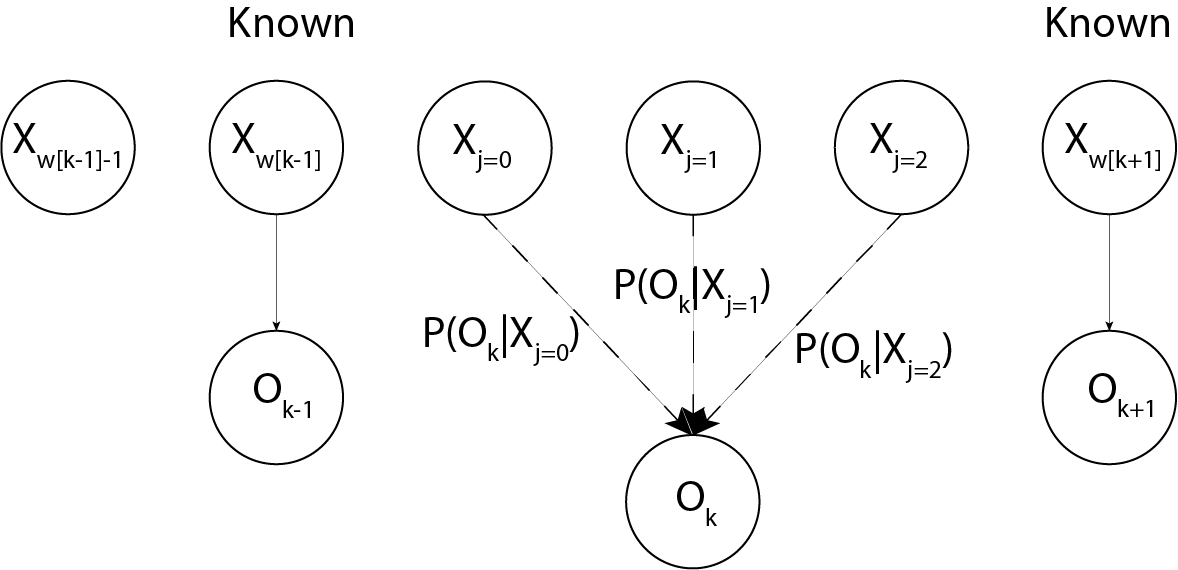}
        \vspace*{2mm}
        \caption{Illustration of W Sampling given $\Psi = p_c$.}
        \label{fig:illusWSample}
    \end{subfigure}
    \hspace{3pc}
    \begin{subfigure}[b]{0.42\textwidth}
        \includegraphics[scale=0.28]{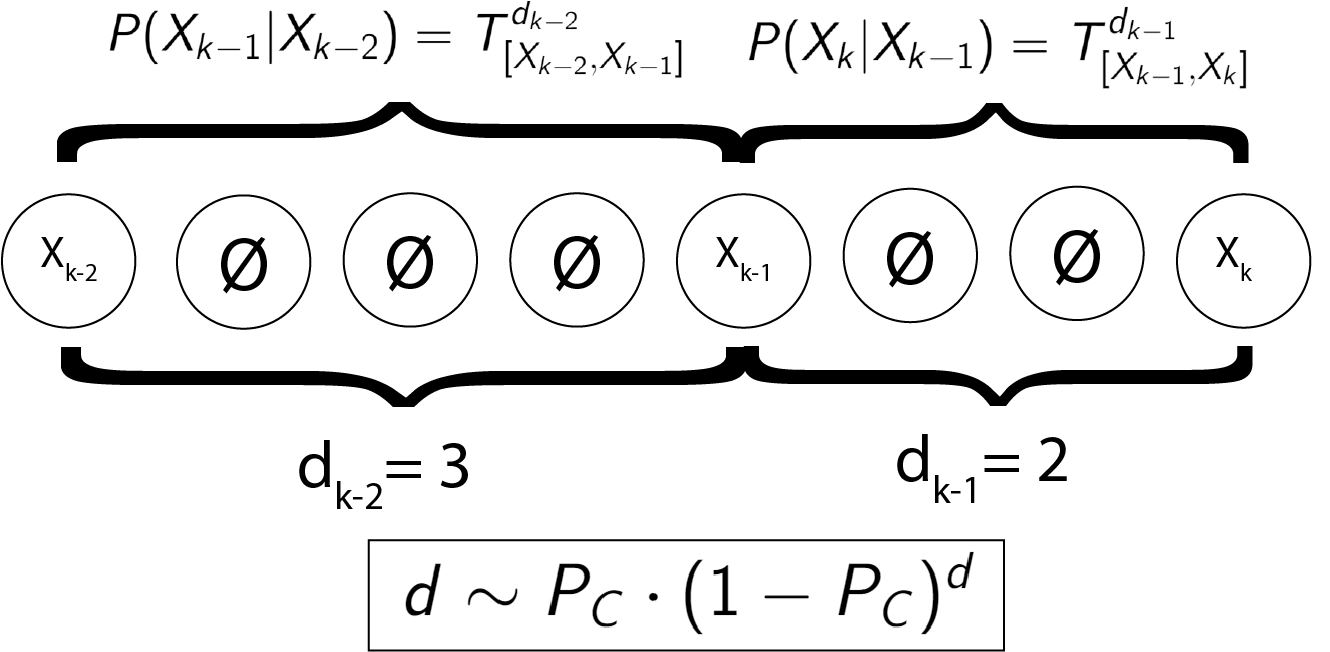}
        \caption{Illustration of HMMOP given $\Psi = p_c$.}
        \label{fig:IllKnownPC}
    \end{subfigure}
    
    \caption{ Reconstruction methods. }
\label{fig:sampleIll}
\end{figure}

In this section, we present the reconstruction methods. As discussed in Sections \ref{sec:SummeryContribution} and \ref{sec:analytic}, these methods rely on different levels of knowledge about the underlying latent dynamic process or the omitting process. \textbf{The first method}, named the "Matching Sampler," requires knowledge of the full unknown sequence length (N) as input. For the case of ignorable missing observations, this method does not require any additional information. However, in the case of non-ignorable missing observations, knowledge of the omitting probabilities ($\Psi$) is required. We show that even if this information is only partially known, it can be inferred as part of the algorithm and the method is highly robust to misspecifications regarding $\Psi$. \textbf{The second method}, named the "Gaps Sampler," only requires knowledge of $\Psi$ as input. Although it does not have access to information about N, the "Gaps Sampler" is also somewhat robust to misspecifications regarding $\Psi$.
Often, knowledge about $N$ can be derived from prior knowledge on the underlying process $M'$, for instance, when there are distinctive "start" and "end" states with knowledge about the latent process rate \citep{Saelens2019,Herring2018}. On the other hand, knowledge about $\Psi$ exists for cases in which prior knowledge on the sampling method ($\Phi_{\Psi}$) exists \cite{LummertzdaRocha2018}. In addition, in cases where the reconstruction is a first step for a prediction task \cite{10.1371/journal.pone.0221245}, $\Psi$ can be partially inferred as an hyper parameter using the labeled data (Supplementary Material Section \ref{Supp:Inference} presents a prediction algorithm for HMMOP).

\textbf{Gibbs sampling.} Both of the methods relay on the Gibbs sampler \citep{gelman2013bayesian,10.1214/08-BA326}. A Gibbs sampler samples the HMM parameters (i.e., $T$ and $\Theta$) and the latent state sequences $\mathcal{X}$ conditioned on $\mathcal{O}$. More specifically, one interchangeably samples $P(X_i|T,\Theta,\mathcal{O})$, and then samples $P(\Theta|X_i,\mathcal{O})$ and $P(T|X_i)$.
Here we extend this procedure by incorporating the uncertainty of the unknown gaps into the sampling process using an additional set of latent helper variables.
Let $ \mathcal{X} = \Set{X_i}$ be the set of all latent states sequences as describe above, and let $O_i\in \mathcal{O}$ be the observed sequences corresponding to $X_i$.

\subsection{The Matching Sampler}\label{MatchingSampler}
The Matching Sampler builds upon $\mathcal{W} = \Set{W_i}$ (see Section \ref{sec:background}) as an helper set of latent variables, hence the challenge mainly lies in sampling the helper variables $P(W_i|X_i, N_i, O_i, T, \Theta)$. To address this challenge, we first note that the conditional distribution of $W_i$ given $X_i$ is independent of $T$. Therefore, our problem reduces to sampling $P(W_i = [w_{i0},\dots, w_{iK}]|\Theta,X_i,O_i,N_i,\Psi).$\\
For $k\leq K$, let $W^k$ denote the sequence $W$ with $w_k$ omitted. Also let  $I_{[w_{k-1},w_{k+1}]}(w)$ be the indciator function.  
The sampling of $W_i$ will be done using its own Gibbs sampler. That is, instead of sampling $W_i$, we iteratively sample its components,$w_{ik}$, conditioned on the rest of the matching $P(w_{ik}|W_i^k,\Theta,X_i,O_i,\Psi).$
The Markov property of the sequence $X$ implies that $w_{ik}$ depends on the sequences $W_i^k,O_i,X_i$ only through ``local variables'' at $k$,  that is, it depends on $X_{i,w_{i,k-1}}$ up to $X_{i,w_{i,k+1}}$, and on the single observation $O_{ik}$ :
\begin{equation} \label{PWk_2}
\begin{split}
    &P(w_{ik}|W_i^k,\Theta,X_i ,O_i,\Psi) \propto \\ &P(O_{k}|  X_{w_k}) \cdot (1-\Psi(X_{w_k})) 
  \cdot I_{[w_{k-1},w_{k+1}]}(w_k)  
\end{split}
\end{equation}
This situation is illustrated in Figure \ref{fig:illusWSample}.

One can see a similarity between $W_i$ to the ``wrapping" vector from the known Stochastic Dynamic Time Wrapping (DTW) \cite{doi:10.1080/03772063.1988.11436710}. Notice that while variations of SDTW can infer optimal W from $P(W|\ldots)$, our challenge is \textbf{sampling} $P(W|\ldots)$. Moreover, not every state in $X_i$ is part of $W_i$ as the SDTW algorithm requires.  

In practice we use the Metropolis-Hasting \cite{doi:10.1063/1.1699114} for better convergence. Full details of the derivation of \eqref{PWk_2} and the complete algorithm, including the sampling of $T$, $X$, and $W$, as well as initialization considerations, can be found in Section \ref{GibbsSamplerKnownN} of the appendix.

\begin{figure*}[t]
\centering
\begin{subfigure}[b]{0.32\textwidth}
         \includegraphics[width=\textwidth]{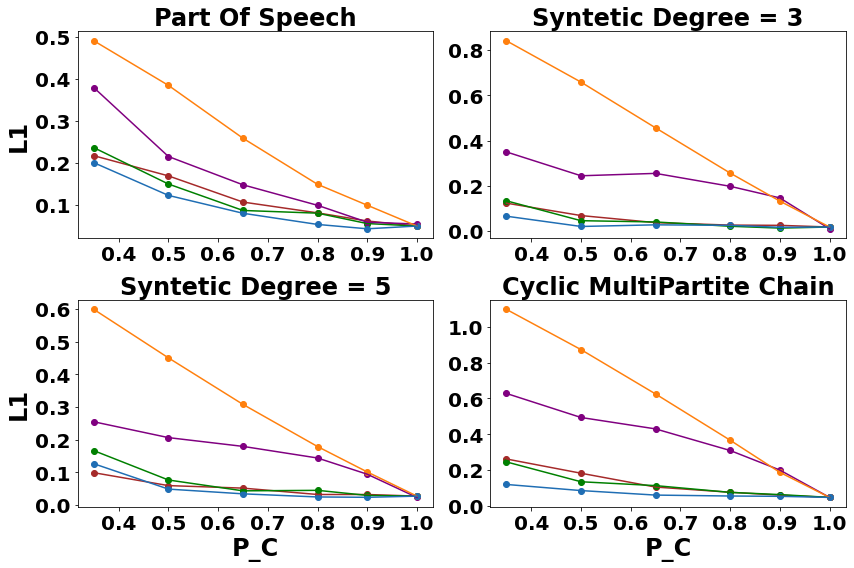}
         \caption{Reconstruction for ignorable setup.}
         \label{fig:knownN}
\end{subfigure}
\hfill
\begin{subfigure}[b]{0.32\textwidth}
         \includegraphics[width=\textwidth]{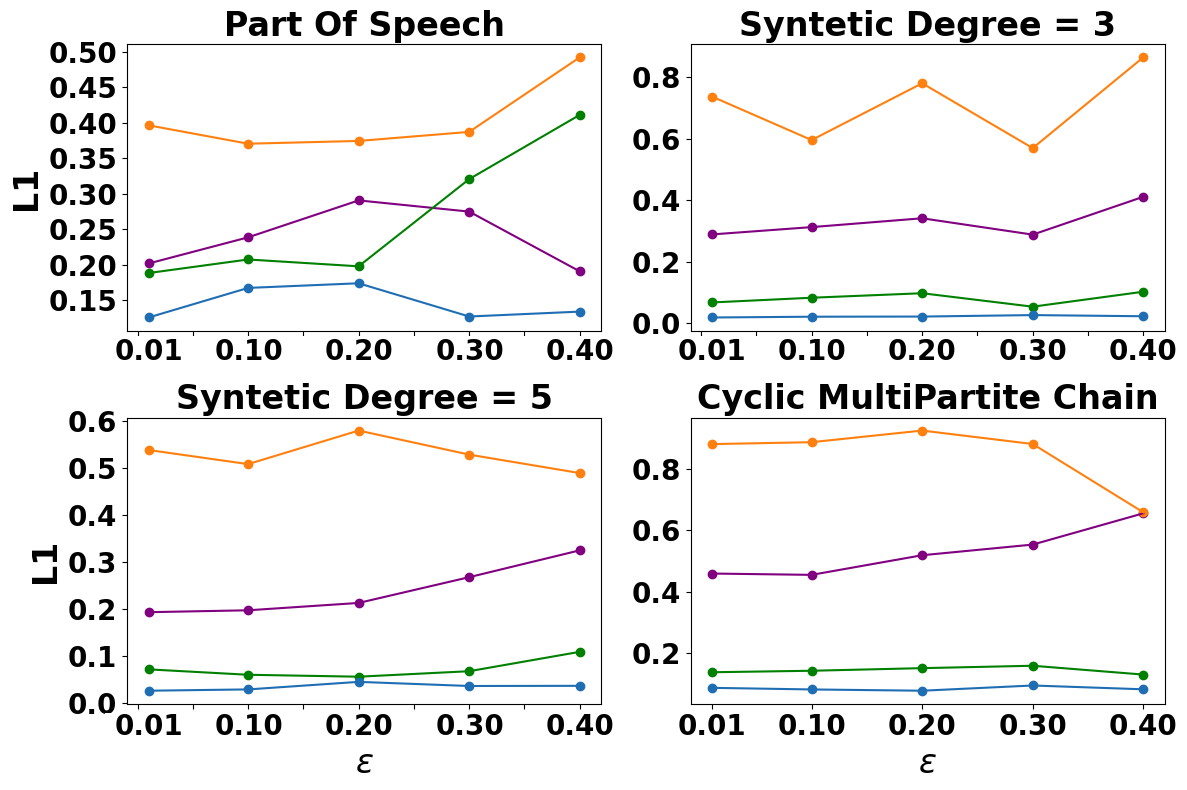}
         \caption{Reconstruction for nonignorable setup.}
         \label{fig:nonIgnknownN}
\end{subfigure}
\hfill
\begin{subfigure}[b]{0.32\textwidth}
          \includegraphics[width=\textwidth]{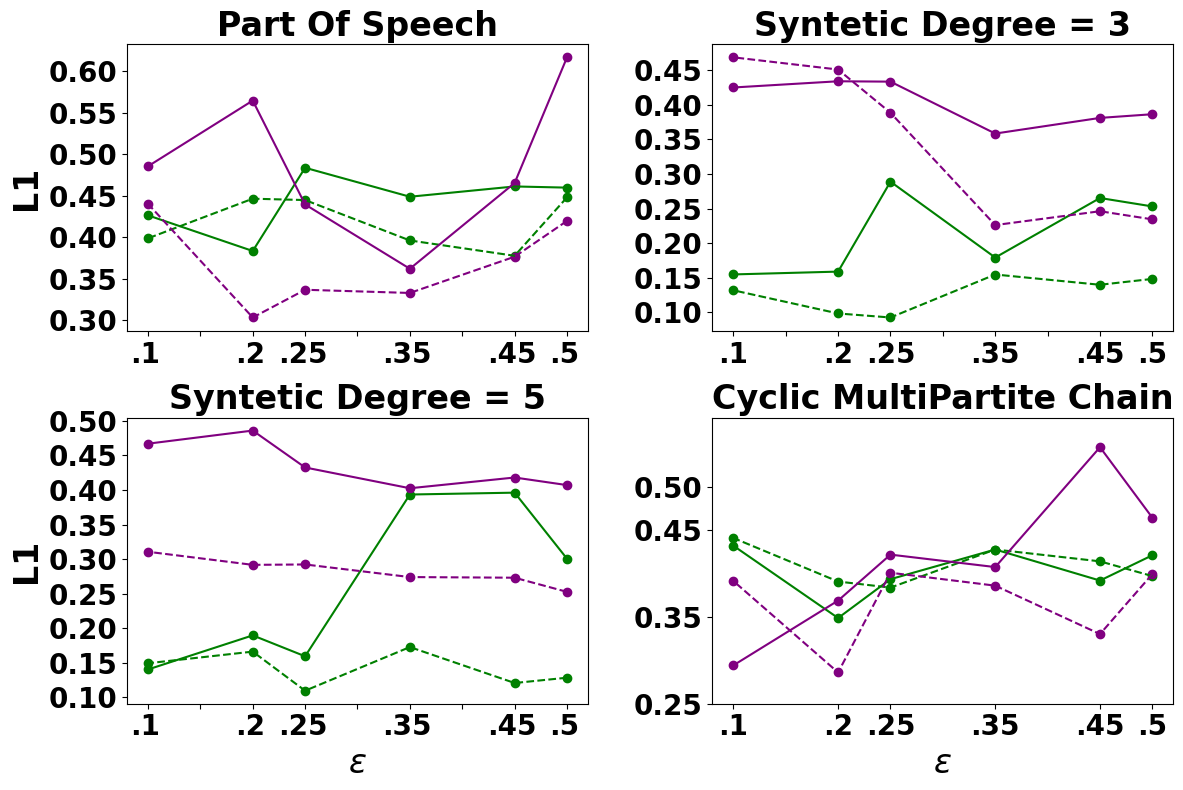}
          \caption{The impact of $\Psi$ imputation.}
    \label{fig:imputation}
\end{subfigure}
\caption{\textbf{(a, c)} Both Samplers significantly improves on the \emph{Naive} method. While the "Gaps" is better. \textbf{(b)} The Gap sampler converge to a better placement, faster.}
\label{fig:knownstuff}
\end{figure*}

\subsection{The Gaps Sampler}\label{sec:Gap}
The Gaps Sampler presents a different representation for the missing observations locations by introducing variables $d = [d_0, \ldots, d_K]$. Those variables represent the number of omitted observations between any two observed observations, which we denote as gaps. Figure \ref{fig:IllKnownPC} illustrates the Gaps sampler for the ignorable case for which $\Psi(\cdot) = p_c$. Our goal is then sample $P(d_k |X_{k,k+1})$. Lets define $s = [s_0,\ldots,s_{d-1}] \in S^d$ as the set of all $d$ long states trajectories, also, lets assume $d < \mathcal{S}$ for a predefined $\mathcal{S}$. We say that a sequence is \emph{fully omitted} if we didn't observed any of the sequence states, we denote the case of a fully omitted $s$ as $\Bar{s}$. Given $\Phi_{\Psi}$ independent, $P(\Bar{s}|s) = \prod_{i=0}^{d-1} \Psi(s_i) $.
By definition, $P(d_k |X_{k,k+1})$ is the probability for $s \in S^d$ been fully omitted: 
\begin{multline*}
 P(d_k |X_{k,k+1}) =\\ 
 \frac{\sum\limits_{s \in S^d} T_{[X_k,s_0]} \cdot T_{[s_{d-1}, X_{k+1}]} \cdot \Psi(s_0) \cdot \prod\limits_{i=1}^{d-1} T_{[s_{i-1}, s_{i}]} \cdot \Psi(s_i)
  }{\sum\limits_{\tau = 0}^{\mathcal{S}} \sum\limits_{s' \in S^{\tau}} P(s'|X_{k,k+1})\cdot P(\Bar{s'}|s')} \\  \propto \sum_{s \in S^d} 
  T_{[X_k,s_0]} \cdot T_{[s_{d-1}, X_{k+1}]} \cdot \Psi(s_0) \cdot \prod\limits_{i=1}^{d-1} T_{[s_{i-1}, s_{i}]} \cdot \Psi(s_i)
\end{multline*}
And given that summing over $s \in S^d$ is not feasible, we suggest the following formulation: 
\begin{multline*}
=\sum\limits_{x \in \mathcal{X}} \sum\limits_{j \in \mathcal{X}} T_{[j,x]} \cdot \Psi(x) \cdot P(d-1|X_{k,k+1},s_{d-2}=j) \\
P(d-1|(X_{k, k+1})) = \sum\limits_{j \in \mathcal{X}} P(d-1|X_{k,k+1},s_{d-2} = j)  
\end{multline*}

This form the \emph{forward algorithm} \cite{18626} with $P(d|X_{k,k+1},s_{d-1} = j)$ as the forwarding elements.
Notice that $P(\ldots, d-1)$ are intermediate steps for calculating $p(d)$, hence the computational complexity of $P(1), \ldots, P(\mathcal{S})$ and $P(\mathcal{S})$ are equal.
Full details of the Gaps sampler are given in Section \ref{supp:samplingd} of the appendix.

\subsubsection{Sampling $\Psi$}\label{subsec:SamplingPsi}
Both methods include the ability to sample and impute $\Psi$ for cases where $\Psi$ is partially known. Note that $\Psi(s)$ follows a Bernoulli distribution, so the conjugate prior for $\Psi(s)$ for each $s$ is the Beta distribution. Hence, the posterior distribution of $\Psi(s)$ given $X$ and $W$ is $\Psi(s)|X,W \sim Beta \left(\mu(s \in X^W), \mu(s \in X) - \mu(s \in X^W)\right)$ where $\mu(\cdot)$ is the counting measure. Section \ref{sec:experiments} extensively evaluate the ability of our methods to impute partial $\Psi$.

\section{Experiments}
\label{sec:experiments}

This section evaluates the performance of the following reconstruction methods:
1. The Matching sampler (\textbf{\textcolor{Purple}{Purple}});
2. The Gaps sampler (\textbf{\textcolor{Green}{Green}}); 
3. The \emph{Naive} reconstruction as described in Section \ref{Intro:MissingObsIntro} (\textbf{\textcolor{BurntOrange}{Orange}}); And,
4. The ``ideal benchmark'' (\textbf{\textcolor{Blue}{Blue}}) where the true gaps locations are known to the sampler as described in Section \ref{Intro:part2}; Additionally, for the ignorable case, we evaluate the semi analytical approach (\textbf{\textcolor{Brown}{Brown}}) which is a byproduct of the analytical analyses; \\
The following models were used to generate the data (full details are given in Supplementary Material Section \ref{Supp:EvaluationsModel}): 
1,2. \textbf{``Synthetic Degree$\{d\}$" } A synthetic chain on 10 states, where each state has $d$ outgoing  transitions, chosen at random; 
3. \textbf{``Cyclic MultiPartite"} Synthetic chain with 25 states. The states are divided into 5 groups ranging from 1 to 5, transitions are only possible between states of consecutive groups and in a cyclic way. For example, a state from group 4 only moves to a state in group 0, etc; 
4. \textbf{``Part Of Speech"} process. Transitions and emissions (part of speech and words respectively) probabilities were extracted from the Brown NLP corpus \cite{BrownDS}. The trajectories were then sampled based on those parameters. \\
With the exceptions of the ``Part Of Speech" chains, state emissions are distributed with Normal distribution $N(\mu_i,0.1)$ for $\mu_i \in [0,n_{state}-1]$;
Unless specified otherwise, 1500 sentence($U$) of length 80($N$) were sampled. For all figures, Y-axis is the $L_1$ distance between the reconstructed and the ground truth transition matrices. Supplementary Material Section \ref{Supp:SD} presents the standard deviations (s.d) of this section results. 

Note that transitions are of paramount importance in biological settings, as the transition matrix often model the biological mechanism behind the process in question \citep{shokoohi2019hidden, ye2019circular, yoon2009hidden}. Furthermore, in these instances, the states and the emissions probabilities are commonly known \citep{setty2019characterization, Liu2017-yh, LummertzdaRocha2018}.
In our experimental setup, we adopted parameters that bear close resemblance to those observed in biological systems. For instance, in numerous genomics applications employing HMMs, the state count is commonly 6 (comprising four observable and two hidden states) or 21, as elaborated in \cite{yoon2009hidden}. Moreover, it's worth highlighting that the transition matrix in such cases is often sparse. In addition, in Cell Trajectory Analyses (CTA), the state count is usually determined by user's discretion but typically does not exceed 20, as validated by \cite{shokoohi2019hidden, ye2019circular}, where 12 states were deployed. In \cite{Saelens2019}, one of the most exhaustive CTA method comparisons to date, the maximum state count was set at 20. CTAs often can be seen as a branching process that incorporates "local" cycles and a slower "global" cycle, a presumption that is integrated into our bipartite model.

\subsection{Reconstruction Under Correct Specification}\label{exp:KnownNPC}
This section evaluates the reconstruction performance of the methods with no miss-specification. Figure \ref{fig:knownN} presents the reconstruction results for the ignorable case, that is, only a single number is provided to the methods which is the true percentage of missing observations (or the true trajectory length $N$ for the Matching sampler). In this experiment, we pick varying $p_c$'s (X-axis), i.e. $\Psi = 1-p_c$ percentage of observations are deleted. 
As the figure shows, the performance of all the algorithms is significantly better than the \emph{Naive} algorithm, which for most cases, is the only applicable one (\ref{Intro:part2}). Moreover, for the Gaps sampler, the reconstruction performance \textbf{matches that of the ideal benchmark} (\ref{Intro:part2}). As discussed in Section \ref{sec:SummeryContribution}, this result is remarkable, due to the exponential number of possible placements $W$. 

To gain a better understanding of the performance distinctions between the samplers, we will conduct a more in-depth examination of them. To that end, we fix a ground-truth $T$, and learn the placement parameters $W$ (or, equivalently,  $\mathbf{d}$ ). That is, $T$ is fixed, and in a single iteration of the sampler, we sample $P(X \vert W,T,O)$ and then $P(W \vert X,T,O)$. The performance was measured by evaluating the average log likelihood of the trajectories, $\frac{1}{|\mathcal{O}|} \sum_{i}^{\mathcal{O}} log \left( P(O_i,X_i \vert T,W_i) \right)$ after each iteration of the sampler. We have also varied the number of times $P(W|X,O)$ is sampled, that is, for a fixed $X$, the matching sampler had from 30 to 300 steps to find the ``right'' $W$, before the next $X$ was sampled. For the Gaps sampler, only one step of $W$ sampling is required by design per $X$ sample.
As shown in Figures \ref{fig:convergenceSpeed}, while on the degree $d=5$ the performance is comparable, on the Cyclic MultiChain data the Gaps sampler finds more likely trajectories dramatically faster.

\begin{figure}[h]
\begin{center}
          \includegraphics[width=0.4\textwidth]{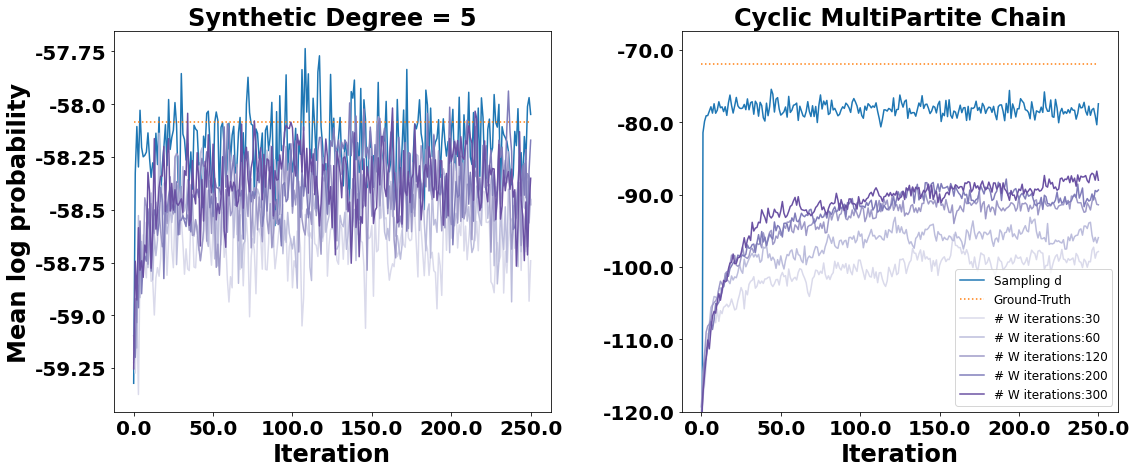}
          \vspace*{-3mm}
          \caption{Convergence of the placements parameters.  }
    \label{fig:convergenceSpeed}
\end{center}
\end{figure}

Figure \ref{fig:nonIgnknownN} presents the evaluation results for the non-ignorable case. In this experiment, for each $s$ we sample $\Psi(s) \sim Uniform[0.5 - \epsilon , 0.5 + \epsilon]$ for varying $\epsilon$ (X-axis), which control the variance of $\Psi(s)$. After omitting observations according to $\Psi(s)$, we provide it to the methods.
As the figure shows, the results are consistent with those of Figure \ref{fig:knownN} even for drastic variance in $\Psi(s)$. Previous works has demonstrated a decrease in performance when dealing with non-ignorable missing observations \cite{https://doi.org/10.48550/arxiv.2109.02770}, but our methods appear to be effective in this case, likely due to the use of Gibbs sampling which impute missing observations as an intermediate step.

Figure \ref{fig:imputation} evaluates the performance for when only limited knowledge of $\Psi$ is available. This experiment highlights a key advantages of sampling-based methods, which can infer missing values of $\Psi$. 
Specifically, we randomly generate a single $\Psi(s) \sim Uniform[0.35 , 0.65]$, and then delete a proportion, $\epsilon$, of entries. The partial $\Psi(s)$ is then provided to the methods. In the "Sampling imputation" case (dashed lines), the missing entries are inferred using the sampling procedure described in Section \ref{subsec:SamplingPsi}, while in the "Random imputation" case, they are filled in with random values.\\
As shown in Figure \ref{fig:imputation}, the imputation step leads to a significant improvement in performance. The Matching Sampler, which is provided with knowledge of $N$, demonstrates improved results across all benchmarks and levels of $\epsilon$. Additionally, the Gaps Sampler, which does not have access to this information, also shows notable improvement for some benchmarks. These results are noteworthy, as knowledge of $N$ is common in many applications of PHMMs, and obtaining prior knowledge about $\Psi$ can be costly or difficult to infer. Furthermore, the fact that the Gaps Sampler does not require knowledge of $N$, making this result even more impressive.

\subsection{Robustness Under Misspecification}\label{sec:pcMiss}

\begin{figure*}[t]
\centering
\begin{subfigure}{0.32\textwidth}
  \includegraphics[width=\textwidth]{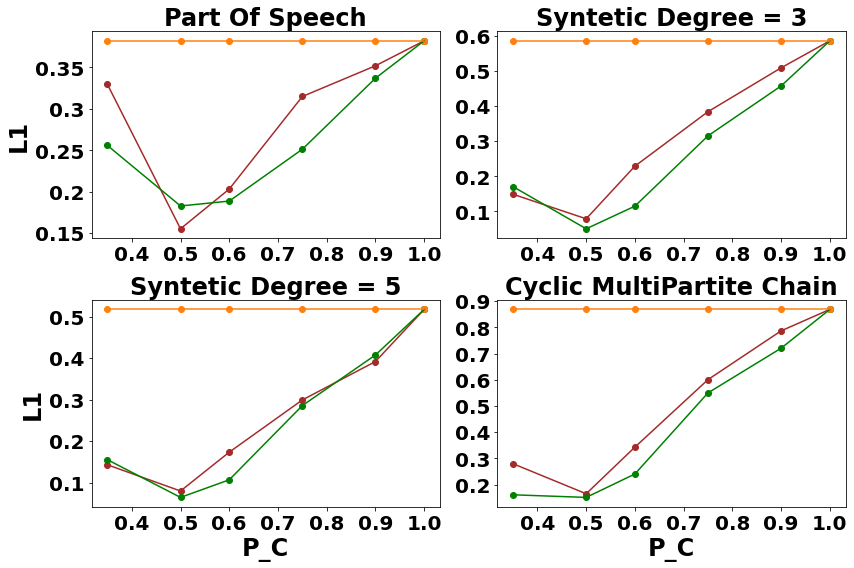}
  \caption{Wrong $\mathbf{p_c}$.}
    \label{fig:wrongPC}
\end{subfigure}
\hfill
\begin{subfigure}{0.32\textwidth}
    \includegraphics[width=\textwidth]{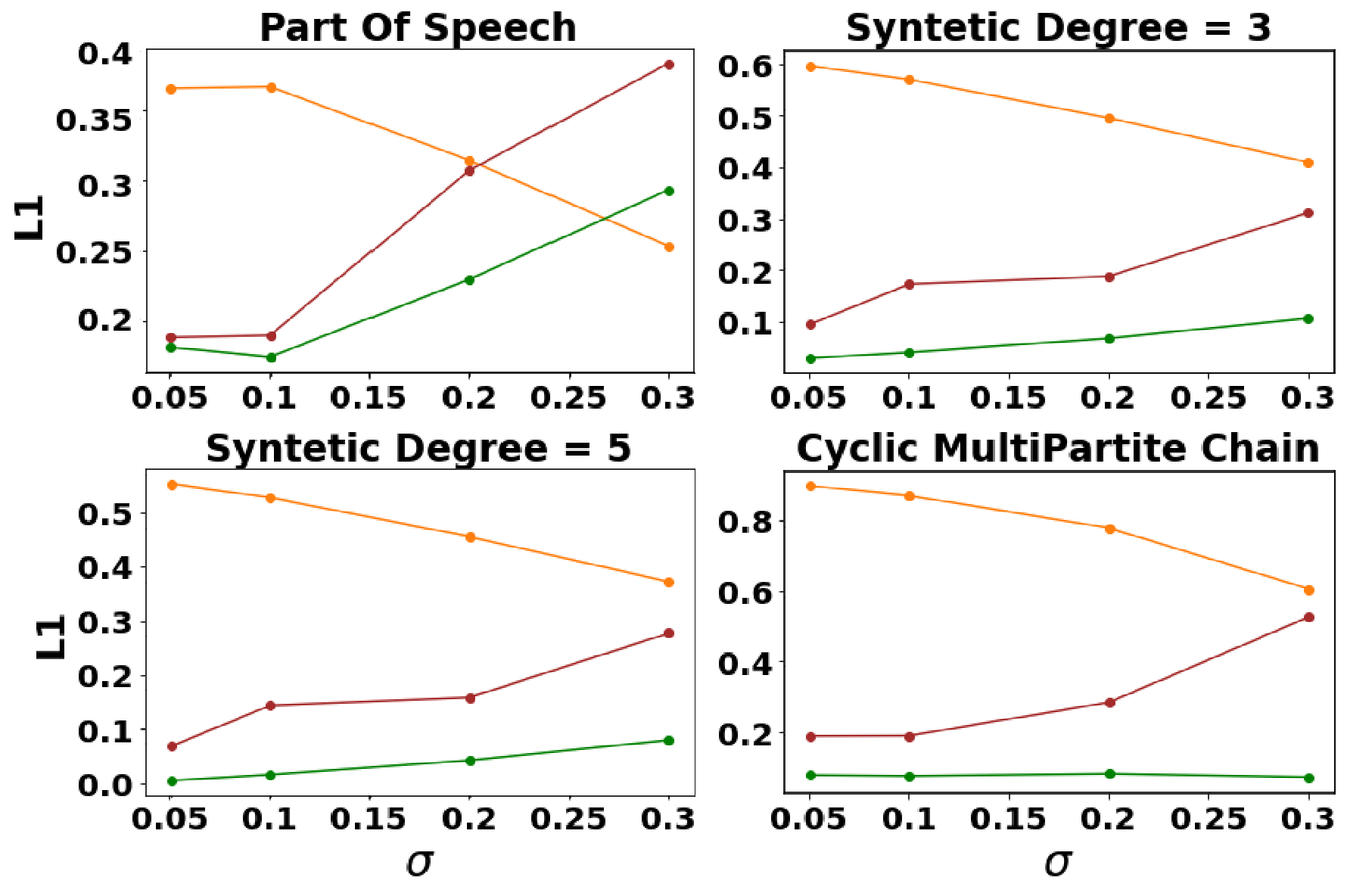}
    \caption{Non-constant $\mathbf{p_c}$.}
    \label{fig:robuNonConsPC}
\end{subfigure}
\hfill
\begin{subfigure}{0.32\textwidth}
       \includegraphics[width=\textwidth]{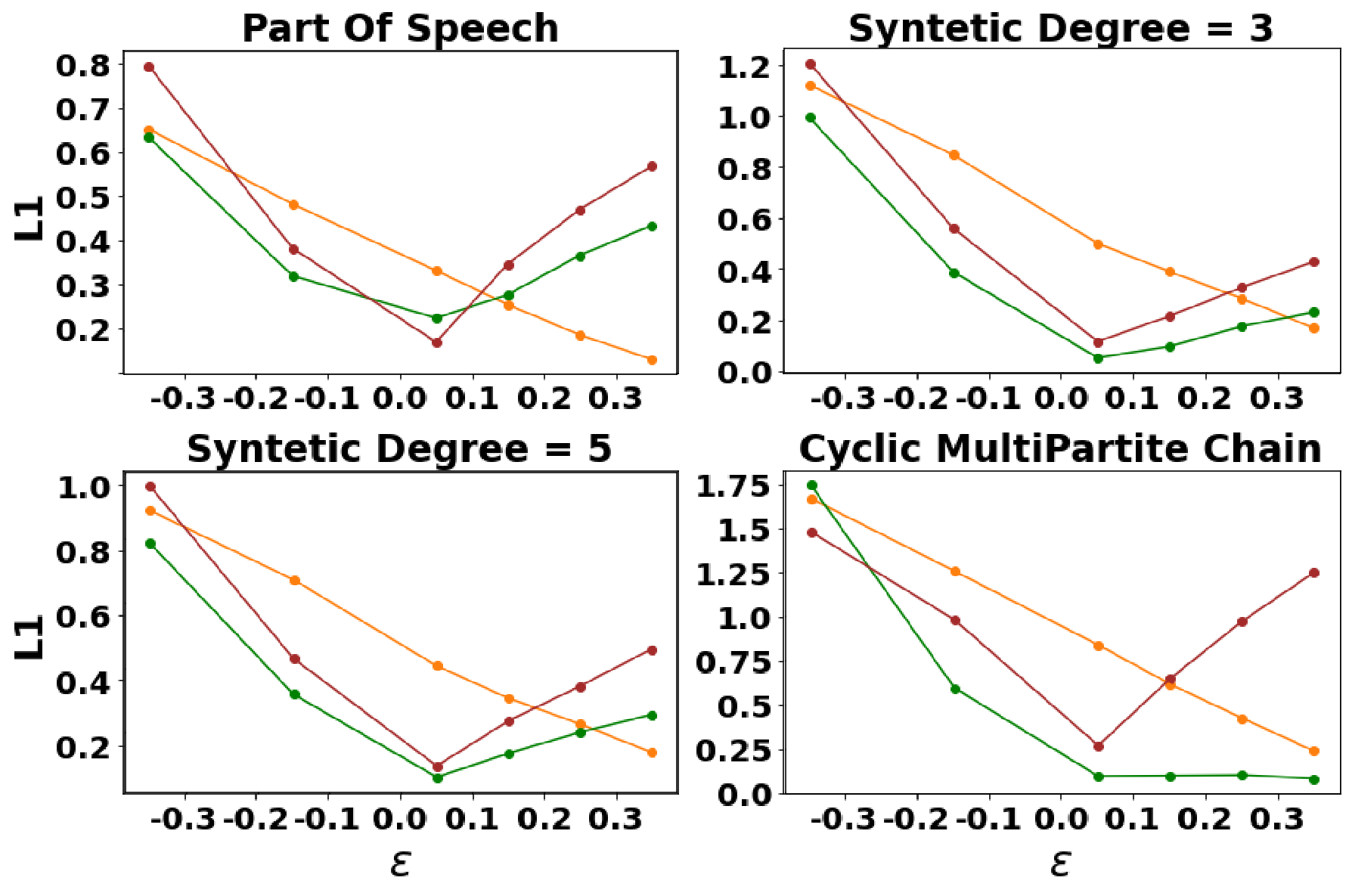}
       \caption{$\mathbf{\Phi_{p_c}}$ with memory.}
    \label{fig:robuNoniisPC}
\end{subfigure}
\hfill
\caption{Robustness Under Misspecification.}
\label{fig:missps}
\end{figure*}

From a practical point of view, it is reasonable to assume that $\Psi$ will be known to the reconstruction algorithms only up to some error. Therefore in this section we evaluate the robustness of the algorithms under different misspecifications.
In view of the results of the previous section, in this section we focus on the "Gaps sampler". Additionally, we presents the results of the semi-analytical approach as a comparison to the ignorable case.

Figure \ref{fig:wrongPC} presents the case of ``wrong $p_c$''. Here we address the ignorable case where the $p_c$ estimate provided to the algorithm differs from the ground truth value $p_c = \half$. As the figure shows, \textbf{both of the algorithms are relatively robust with respect to wrong $p_c$}. Moreover, note that even for sizable error in $p_c$ both algorithms are superior to the \emph{Naive} approach. In addition, the sampler based algorithm tends to be better then the semi analytical algorithm.

Figure \ref{fig:robuNonConsPC} presents the case of "non-constant" $p_c$, where there is no single constant ground truth $p_c$. Instead, for each sentence, $p_c$ is sampled independently from a normal distribution $p_c \sim N(\half,\sigma)$. In this case, the $p_c$ provided to the algorithm is the average one ($p_c = \half$). Notice that the larger the $\sigma$, the noisier the data is, and hence the harder the problem. Never the less, the Gaps sampler give great results even for high values of  $\sigma$, outperforming the semi-analytic method in a more pronounced way.
Given that the case of non constant $p_c$ is the most realistic one, this result showcases the \textbf{benefit in sampling based methods}.

\textbf{Robustness under Observations Removal Process Misspecifications.}\label{ssec:noiid}
Previous sections assume that the omission process,$\Phi_{\Psi}$ (Section \ref{ProblemSetup}), is modeled as a series of  independent Bernoulli trials. We now consider a case where instead the omission process is a Markov process with two states: $R_s="seen"$ and $R_m="missing"$, with transition probabilities given by 
$P(R_s|R_s) = p_c + \epsilon$ (and hence  $P(R_m|R_s) = p_c - \epsilon$), and  $P(R_s|R_m) = (1-p_c) - \epsilon$ (and hence $P(R_m|R_m) = (1-p_c) + \epsilon$). 
Thus, $\epsilon$ represents the bias of staying at the same state, with $\epsilon = 0$ case being equivalent to the i.i.d Brenoulli process above. 
Figure \ref{fig:robuNoniisPC} compares the algorithms for different values of the bias $\epsilon$. 
As the figure shows, the sampler results are mostly better than the ones of the \emph{Naive} solution, and constantly better then the semi-analytical ones. Nevertheless, the results of the \emph{Naive} solution improved as $\epsilon$ increases, and after some threshold of $\epsilon$, the \emph{Naive} algorithm become somewhat better then the sampler.
As discussed in Proposition \ref{MMM}, the \emph{Naive} reconstruction can be seen as the weighted mean of the ground-truth matrix T and wrong transition matrices $T^d$ derived from sequences of length d of omitted observations only. As $\epsilon$ increases, the probability of longer consecutive sequences increases (that is, long sequences of $P(R_s|R_s)$ or $P(R_m|R_m)$). But, while longer sequences of consecutive observed states ($R_s$ ) are beneficial to the $Naive$ solution, the solution is negatively effected by the number of omitted observations sequences, and not their length. 

\textbf{Robustness under Ignorability Assumption Misspecifications.}
Figure \ref{fig:nonignorableMiss} examines the impact of considering the non-ignorable setup on the performance of the methods.
For this, for each $s$ we sample $\Psi(s) \sim Uniform[0.5 - \epsilon , 0.5 + \epsilon]$ for varying levels of $\epsilon$ (X-axis), and provide the algorithms with only the empirical $p_c$, or $N$ for the Matching Sampler (dashed lines). We also compare the results to those of the \emph{``ideal benchmark,"} which assumes fully observed missing locations but does not take into account the non-ignorability assumption. As the figure shows, while the performance is similar for low values of $\epsilon$, it becomes more pronounced for higher variance for both the Gaps Sampler and the \emph{``ideal benchmark."} This demonstrates the importance of considering the non-ignorability assumption, which is often overlooked in HMM reconstruction methods.
\begin{figure}[h]
\begin{center}
       \includegraphics[width = 0.35\textwidth]{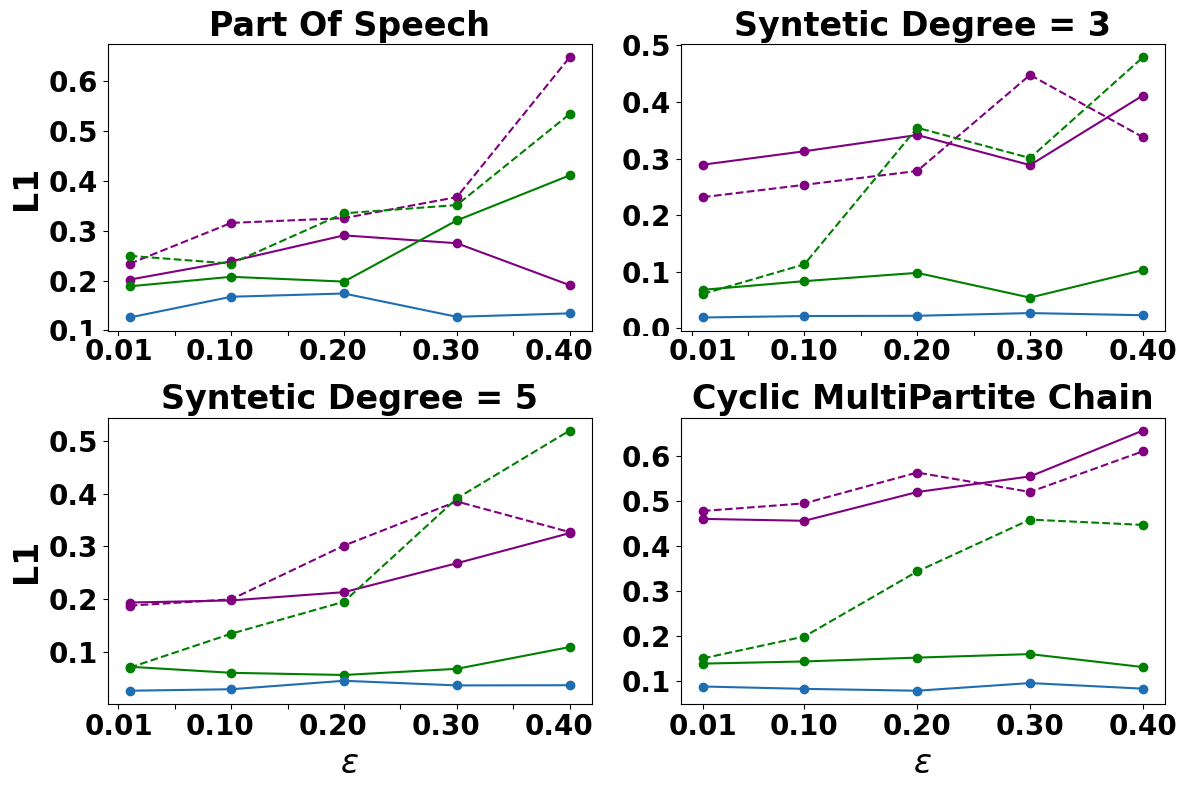}
       \vspace*{-5mm}
       \caption{Assume ignorable for non-ignorable $\Phi$. }
    \label{fig:nonignorableMiss}
\end{center}
\end{figure}

\subsection{Computational Complexity of the Proposed Algorithms}
The computational complexity of both the Gaps sampler and the Matching sampler depends on three factors: 
the complexity of sampling a single sentence, the number of of sentences, and the total number of iterations involved. When analyzing the complexity for a single sentence, the computational demands of both samplers are governed by the backward-forward sampling algorithm used for state sequence $X$ sampling, and additionally, by the sampling of either missing location ($W$) or interval length assignments ($D$). The backward-forward sampling algorithm has a well-understood complexity of $O(2 \cdot S^2 \cdot N)$, where $S$ symbolizes number of states and $N$ represents the length of the latent process. Also, for each time point $t$, a sampling procedure from a Dirichlet distribution of size $S$ is initiated.

Regarding to the sampling of interval lengths $D$, a forward algorithm is engaged (see Section \ref{sec:Gap}), carrying a complexity of $O(S^2 \cdot N)$, complemented by a singular sampling operation from a Dirichlet distribution of size 100, as detailed in Section \ref{sec:Gap}. The process of sampling $W$ presents the most computation-intensive part of this paper, utilizing an additional inner sampler. Its complexity hinges on the product of the count of pre-determined sampling iterations and $N$. 
As depicted in Figure \ref{fig:convergenceSpeed}, the Gaps parameters achieve faster convergence, which consequently leads to quicker Gibbs convergence.

As a specific example, the most computationally demanding experiment conducted in this paper (featuring 1000 sentences of size 100, a matching sampler with 120 iterations for $W$ learning, and the imputation of $\Psi$) would take an estimated 8 minutes. The majority of configurations, however, would require significantly less time. In our observations, this duration is considerably shorter than what the standard EM algorithm requires in the widely used Pomegranate package \citep{Schreiber2016}. The improvement in our computational performance is attributable to multiple elements, including efficient distribution sampling (especially Gaussian and Dirichlet) and full parallelism.

\section{Conclusion }
\label{sec:discussion}
This study addresses the challenge of reconstructing hidden Markov models from ordered trajectories with missing observations at unknown locations, which is a common issue in fields such as computational biology. A novel and general approach was proposed, based on Gibbs sampler, which overcomes many of the limitations of existing methods and opens the way for new applications. Additionally, the approach addresses the non-ignorable missing observations setting . Moreover, the robustness of the algorithms to different misspecifications was demonstrated. Two notable results were presented: 1) the reconstruction performances are comparable to the performance of algorithms that use known missing locations, even in the non-ignorable case, and 2) partially known omission probabilities can be inferred from data. As future work, further improvement is possible by incorporating prior knowledge of the latent process structure into the Bayesian framework and by using specialized omitting process tailored to specific problems. 
 
\section*{Acknowledgements}
This project has received funding from  the European Union’s Horizon Europe Programme under grant agreement No.101084642

\bibliography{HMMOP}

\begin{thebibliography}{40}
\providecommand{\natexlab}[1]{#1}
\providecommand{\url}[1]{\texttt{#1}}
\expandafter\ifx\csname urlstyle\endcsname\relax
  \providecommand{\doi}[1]{doi: #1}\else
  \providecommand{\doi}{doi: \begingroup \urlstyle{rm}\Url}\fi

\bibitem[Campbell \& Yau(2018)Campbell and Yau]{campbell2018uncovering}
Campbell, K.~R. and Yau, C.
\newblock Uncovering pseudotemporal trajectories with covariates from single
  cell and bulk expression data.
\newblock \emph{Nature communications}, 9\penalty0 (1):\penalty0 1--12, 2018.

\bibitem[Chadza et~al.(2020)Chadza, Kyriakopoulos, and
  Lambotharan]{Chadza2020-mz}
Chadza, T., Kyriakopoulos, K.~G., and Lambotharan, S.
\newblock Analysis of hidden markov model learning algorithms for the detection
  and prediction of multi-stage network attacks.
\newblock \emph{Future Gener. Comput. Syst.}, 108:\penalty0 636--649, July
  2020.

\bibitem[Chen et~al.(2018)Chen, Rénia, and Ginhoux]{pmid29107741}
Chen, J., Rénia, L., and Ginhoux, F.
\newblock {{C}onstructing cell lineages from single-cell transcriptomes}.
\newblock \emph{Mol Aspects Med}, 59:\penalty0 95--113, 02 2018.

\bibitem[Chib(1996)]{CHIB199679}
Chib, S.
\newblock Calculating posterior distributions and modal estimates in markov
  mixture models.
\newblock \emph{Journal of Econometrics}, 75\penalty0 (1):\penalty0 79--97,
  1996.
\newblock ISSN 0304-4076.
\newblock \doi{https://doi.org/10.1016/0304-4076(95)01770-4}.
\newblock URL
  \url{https://www.sciencedirect.com/science/article/pii/0304407695017704}.

\bibitem[Deconinck et~al.(2021)Deconinck, Cannoodt, Saelens, Deplancke, and
  Saeys]{deconinck2021recent}
Deconinck, L., Cannoodt, R., Saelens, W., Deplancke, B., and Saeys, Y.
\newblock Recent advances in trajectory inference from single-cell omics data.
\newblock \emph{Current Opinion in Systems Biology}, 2021.

\bibitem[Eddy(1998)]{10.1093/bioinformatics/14.9.755}
Eddy, S.~R.
\newblock {Profile hidden Markov models.}
\newblock \emph{Bioinformatics}, 14\penalty0 (9):\penalty0 755--763, 10 1998.
\newblock ISSN 1367-4803.
\newblock \doi{10.1093/bioinformatics/14.9.755}.
\newblock URL \url{https://doi.org/10.1093/bioinformatics/14.9.755}.

\bibitem[Finn et~al.(2011)Finn, Clements, and Eddy]{HMMER1}
Finn, R.~D., Clements, J., and Eddy, S.~R.
\newblock Hmmer web server, interactive sequence similarity searching.
\newblock \emph{Nucleic Acids Research}, 39, 05 2011.

\bibitem[Francis(1965)]{BrownDS}
Francis, W.~N.
\newblock A standard corpus of edited present-day american english.
\newblock \emph{College English}, 26\penalty0 (4):\penalty0 267--273, 1965.
\newblock ISSN 00100994.

\bibitem[Gelman et~al.(2013)Gelman, Carlin, Stern, Dunson, Vehtari, and
  Rubin]{gelman2013bayesian}
Gelman, A., Carlin, J., Stern, H., Dunson, D., Vehtari, A., and Rubin, D.
\newblock \emph{Bayesian Data Analysis, Third Edition}.
\newblock Chapman $\&$ Hall/CRC Texts in Statistical Science. Taylor $\&$
  Francis, 2013.
\newblock ISBN 9781439840955.
\newblock URL \url{https://books.google.co.il/books?id=ZXL6AQAAQBAJ}.

\bibitem[Goldberg(1966)]{UpperBoundsStochastic}
Goldberg, K.
\newblock Upper bounds for the determinant of a row stochastic matrix, 1966.

\bibitem[Hamilton(2020)]{hamilton2020time}
Hamilton, J.~D.
\newblock \emph{Time series analysis}.
\newblock Princeton university press, 2020.

\bibitem[{Hastings}(1970)]{zbMATH03349185}
{Hastings}, W.~K.
\newblock {Monte Carlo sampling methods using Markov chains and their
  applications}.
\newblock \emph{{Biometrika}}, 57:\penalty0 97--109, 1970.
\newblock ISSN 0006-3444.
\newblock \doi{10.1093/biomet/57.1.97}.

\bibitem[Herring et~al.(2018)Herring, Banerjee, McKinley, Simmons, Ping,
  Roland, Franklin, Liu, Gerdes, Coffey, and Lau]{Herring2018}
Herring, C.~A., Banerjee, A., McKinley, E.~T., Simmons, A.~J., Ping, J.,
  Roland, J.~T., Franklin, J.~L., Liu, Q., Gerdes, M.~J., Coffey, R.~J., and
  Lau, K.~S.
\newblock Unsupervised trajectory analysis of single-cell rna-seq and imaging
  data reveals alternative tuft cell origins in the gut.
\newblock \emph{Cell Systems}, 6\penalty0 (1):\penalty0 37--51.e9, Jan 2018.
\newblock ISSN 2405-4712.
\newblock \doi{10.1016/j.cels.2017.10.012}.
\newblock URL \url{https://doi.org/10.1016/j.cels.2017.10.012}.

\bibitem[Higham \& Lin(2011)Higham and Lin]{HIGHAM2011448}
Higham, N.~J. and Lin, L.
\newblock On pth roots of stochastic matrices.
\newblock \emph{Linear Algebra and its Applications}, 435\penalty0
  (3):\penalty0 448--463, 2011.
\newblock ISSN 0024-3795.
\newblock \doi{https://doi.org/10.1016/j.laa.2010.04.007}.
\newblock URL
  \url{https://www.sciencedirect.com/science/article/pii/S0024379510001849}.
\newblock Special Issue: Dedication to Pete Stewart on the occasion of his 70th
  birthday.

\bibitem[Kaufmann(2019)]{kaufmann2019hidden}
Kaufmann, S.
\newblock \emph{Hidden Markov models in time series, with applications in
  economics}.
\newblock Chapman and Hall/CRC, 2019.

\bibitem[Liu et~al.(2017{\natexlab{a}})Liu, Lou, Xie, Wang, Chen, Aparicio,
  Zhang, Jiang, and Chen]{Liu2017-yh}
Liu, Z., Lou, H., Xie, K., Wang, H., Chen, N., Aparicio, O.~M., Zhang, M.~Q.,
  Jiang, R., and Chen, T.
\newblock Reconstructing cell cycle pseudo time-series via single-cell
  transcriptome data.
\newblock \emph{Nature Communications}, 8\penalty0 (1):\penalty0 22, June
  2017{\natexlab{a}}.

\bibitem[Liu et~al.(2017{\natexlab{b}})Liu, Lou, Xie, Wang, Chen, Aparicio,
  Zhang, Jiang, and Chen]{liu2017reconstructing}
Liu, Z., Lou, H., Xie, K., Wang, H., Chen, N., Aparicio, O.~M., Zhang, M.~Q.,
  Jiang, R., and Chen, T.
\newblock Reconstructing cell cycle pseudo time-series via single-cell
  transcriptome data.
\newblock \emph{Nature communications}, 8\penalty0 (1):\penalty0 1--9,
  2017{\natexlab{b}}.

\bibitem[Lummertz~da Rocha et~al.(2018)Lummertz~da Rocha, Rowe, Lundin,
  Malleshaiah, Jha, Rambo, Li, North, Collins, and Daley]{LummertzdaRocha2018}
Lummertz~da Rocha, E., Rowe, R.~G., Lundin, V., Malleshaiah, M., Jha, D.~K.,
  Rambo, C.~R., Li, H., North, T.~E., Collins, J.~J., and Daley, G.~Q.
\newblock Reconstruction of complex single-cell trajectories using cellrouter.
\newblock \emph{Nature Communications}, 9\penalty0 (1):\penalty0 892, Mar 2018.
\newblock ISSN 2041-1723.
\newblock \doi{10.1038/s41467-018-03214-y}.
\newblock URL \url{https://doi.org/10.1038/s41467-018-03214-y}.

\bibitem[Metropolis et~al.(1953)Metropolis, Rosenbluth, Rosenbluth, Teller, and
  Teller]{doi:10.1063/1.1699114}
Metropolis, N., Rosenbluth, A.~W., Rosenbluth, M.~N., Teller, A.~H., and
  Teller, E.
\newblock Equation of state calculations by fast computing machines.
\newblock \emph{The Journal of Chemical Physics}, 21\penalty0 (6):\penalty0
  1087--1092, 1953.
\newblock \doi{10.1063/1.1699114}.
\newblock URL \url{https://doi.org/10.1063/1.1699114}.

\bibitem[Moor et~al.(2019)Moor, Horn, Rieck, Roqueiro, and
  Borgwardt]{pmlr-v106-moor19a}
Moor, M., Horn, M., Rieck, B., Roqueiro, D., and Borgwardt, K.
\newblock Early recognition of sepsis with gaussian process temporal
  convolutional networks and dynamic time warping.
\newblock In Doshi-Velez, F., Fackler, J., Jung, K., Kale, D., Ranganath, R.,
  Wallace, B., and Wiens, J. (eds.), \emph{Proceedings of the 4th Machine
  Learning for Healthcare Conference}, volume 106 of \emph{Proceedings of
  Machine Learning Research}, pp.\  2--26. PMLR, 09--10 Aug 2019.
\newblock URL \url{https://proceedings.mlr.press/v106/moor19a.html}.

\bibitem[Morimura et~al.(2013)Morimura, Osogami, and Ide]{NIPS2013_32b30a25}
Morimura, T., Osogami, T., and Ide, T.
\newblock Solving inverse problem of markov chain with partial observations.
\newblock In Burges, C. J.~C., Bottou, L., Welling, M., Ghahramani, Z., and
  Weinberger, K.~Q. (eds.), \emph{Advances in Neural Information Processing
  Systems}, volume~26. Curran Associates, Inc., 2013.
\newblock URL
  \url{https://proceedings.neurips.cc/paper/2013/file/32b30a250abd6331e03a2a1f16466346-Paper.pdf}.

\bibitem[Murphy(2012)]{murphy2012machine}
Murphy, K.~P.
\newblock \emph{Machine learning: a probabilistic perspective}.
\newblock MIT press, 2012.

\bibitem[Nakagawa \& Nakanishi(1988)Nakagawa and
  Nakanishi]{doi:10.1080/03772063.1988.11436710}
Nakagawa, S. and Nakanishi, H.
\newblock Speaker-independent english consonant and japanese word recognition
  by a stochastic dynamic time warping method.
\newblock \emph{IETE Journal of Research}, 34\penalty0 (1):\penalty0 87--95,
  1988.
\newblock \doi{10.1080/03772063.1988.11436710}.
\newblock URL \url{https://doi.org/10.1080/03772063.1988.11436710}.

\bibitem[Nishimoto et~al.(2019)Nishimoto, Tokuoka, Yamada, Hiroi, and
  Funahashi]{10.1371/journal.pone.0221245}
Nishimoto, S., Tokuoka, Y., Yamada, T.~G., Hiroi, N.~F., and Funahashi, A.
\newblock Predicting the future direction of cell movement with convolutional
  neural networks.
\newblock \emph{PLOS ONE}, 14\penalty0 (9):\penalty0 1--14, 09 2019.
\newblock \doi{10.1371/journal.pone.0221245}.
\newblock URL \url{https://doi.org/10.1371/journal.pone.0221245}.

\bibitem[Orr et~al.(2018)Orr, Tadepalli, Doppa, Fern, and
  Dietterich]{orr2018learning}
Orr, J.~W., Tadepalli, P., Doppa, J.~R., Fern, X., and Dietterich, T.~G.
\newblock Learning scripts as hidden markov models, 2018.

\bibitem[Pattabiraman \& Warnow(2021)Pattabiraman and
  Warnow]{Pattabiraman2021-ol}
Pattabiraman, S. and Warnow, T.
\newblock Profile hidden markov models are not identifiable.
\newblock \emph{IEEE/ACM Trans Comput Biol Bioinform}, 18\penalty0
  (1):\penalty0 162--172, February 2021.

\bibitem[Rabiner(1989)]{18626}
Rabiner, L.
\newblock A tutorial on hidden markov models and selected applications in
  speech recognition.
\newblock \emph{Proceedings of the IEEE}, 77\penalty0 (2):\penalty0 257--286,
  1989.
\newblock \doi{10.1109/5.18626}.

\bibitem[Ramati.M(2010)]{IrrTimeSe}
Ramati.M.
\newblock Irreguler-time markov model.
\newblock Master's thesis, Ben Gurion University, Negev, 2010.

\bibitem[Rydén(2008)]{10.1214/08-BA326}
Rydén, T.
\newblock {EM versus Markov chain Monte Carlo for estimation of hidden Markov
  models: a computational perspective}.
\newblock \emph{Bayesian Analysis}, 3\penalty0 (4):\penalty0 659 -- 688, 2008.
\newblock \doi{10.1214/08-BA326}.
\newblock URL \url{https://doi.org/10.1214/08-BA326}.

\bibitem[Saelens et~al.(2019)Saelens, Cannoodt, Todorov, and
  Saeys]{Saelens2019}
Saelens, W., Cannoodt, R., Todorov, H., and Saeys, Y.
\newblock A comparison of single-cell trajectory inference methods.
\newblock \emph{Nature Biotechnology}, 37\penalty0 (5):\penalty0 547--554, May
  2019.
\newblock ISSN 1546-1696.
\newblock \doi{10.1038/s41587-019-0071-9}.
\newblock URL \url{https://doi.org/10.1038/s41587-019-0071-9}.

\bibitem[Schreiber(2016)]{Schreiber2016}
Schreiber, J.
\newblock pomegranate.
\newblock \emph{GitHub repository}, 2016.

\bibitem[Setty et~al.(2019)Setty, Kiseliovas, Levine, Gayoso, Mazutis, and
  Pe’er]{setty2019characterization}
Setty, M., Kiseliovas, V., Levine, J., Gayoso, A., Mazutis, L., and Pe’er, D.
\newblock Characterization of cell fate probabilities in single-cell data with
  palantir.
\newblock \emph{Nature biotechnology}, 37\penalty0 (4):\penalty0 451--460,
  2019.

\bibitem[Shokoohi et~al.(2019)Shokoohi, Stephens, Bourque, Pastinen, Greenwood,
  and Labbe]{shokoohi2019hidden}
Shokoohi, F., Stephens, D.~A., Bourque, G., Pastinen, T., Greenwood, C.~M., and
  Labbe, A.
\newblock A hidden markov model for identifying differentially methylated sites
  in bisulfite sequencing data.
\newblock \emph{Biometrics}, 75\penalty0 (1):\penalty0 210--221, 2019.

\bibitem[Speekenbrink \& Visser(2021)Speekenbrink and
  Visser]{https://doi.org/10.48550/arxiv.2109.02770}
Speekenbrink, M. and Visser, I.
\newblock Ignorable and non-ignorable missing data in hidden markov models,
  2021.
\newblock URL \url{https://arxiv.org/abs/2109.02770}.

\bibitem[Van~den Berge et~al.(2020)Van~den Berge, De~Bezieux, Street, Saelens,
  Cannoodt, Saeys, Dudoit, and Clement]{van2020trajectory}
Van~den Berge, K., De~Bezieux, H.~R., Street, K., Saelens, W., Cannoodt, R.,
  Saeys, Y., Dudoit, S., and Clement, L.
\newblock Trajectory-based differential expression analysis for single-cell
  sequencing data.
\newblock \emph{Nature communications}, 11\penalty0 (1):\penalty0 1--13, 2020.

\bibitem[Wheeler \& Eddy(2013)Wheeler and Eddy]{Wheeler2013}
Wheeler, T.~J. and Eddy, S.~R.
\newblock nhmmer: {DNA} homology search with profile {HMMs}.
\newblock \emph{Bioinformatics}, 29\penalty0 (19):\penalty0 2487--2489, July
  2013.
\newblock \doi{10.1093/bioinformatics/btt403}.
\newblock URL \url{https://doi.org/10.1093/bioinformatics/btt403}.

\bibitem[Ye et~al.(2019)Ye, Gao, and Zhang]{ye2019circular}
Ye, Y., Gao, L., and Zhang, S.
\newblock Circular trajectory reconstruction uncovers cell-cycle progression
  and regulatory dynamics from single-cell hi-c maps.
\newblock \emph{Advanced Science}, 6\penalty0 (23):\penalty0 1900986, 2019.

\bibitem[Yeh et~al.(2012)Yeh, Chan, and
  Symanski]{doi:10.1080/03610918.2011.581778}
Yeh, H.-W., Chan, W., and Symanski, E.
\newblock Intermittent missing observations in discrete-time hidden markov
  models.
\newblock \emph{Communications in Statistics - Simulation and Computation},
  41\penalty0 (2):\penalty0 167--181, 2012.
\newblock \doi{10.1080/03610918.2011.581778}.
\newblock URL \url{https://doi.org/10.1080/03610918.2011.581778}.

\bibitem[Yoon(2009)]{yoon2009hidden}
Yoon, B.-J.
\newblock Hidden markov models and their applications in biological sequence
  analysis.
\newblock \emph{Current genomics}, 10\penalty0 (6):\penalty0 402--415, 2009.

\bibitem[Yu \& Deng(2016)Yu and Deng]{yu2016automatic}
Yu, D. and Deng, L.
\newblock \emph{Automatic Speech Recognition.}
\newblock Springer, 2016.

\end{thebibliography}
\bibliographystyle{icml2023}

\newpage
\appendix
\onecolumn
\section{Naive Reconstruction and Sensitivity w.r.t $W$ Placement}\label{Supp:wrognW}
Figure \ref{Suppfig:Intro} presents the performance of the \emph{Naive} method for additional models.\\
Figure \ref{fig:wrongW} demonstrate the effects of gap location under different perturbation scenarios.
\emph{The first} perturbation (named ``Equivalent") assumes the new locations points to the same states in the original sentence, For example, for sentence [A,B,B,C] the locations vector (0,1,3) and (0,2,3) are equivalent. \emph{The second} permutation (named ``Consecutive") assumes that the consecutive observations(observations without gaps between them) locations are preserved and the number of consecutive observations is the same. For example, if the non-gaps locations are (1,2,5,7,8,10) we create (1,2,4,7,8,16). As the experiments shows, ignoring the missing observations or randomizing them gives quite bad results, although, the reconstruction not necessarily relays on the exact gaps as shown in the ``equivalent`` case.    

\begin{figure}[h]

\begin{center}
        \includegraphics[scale=0.2]{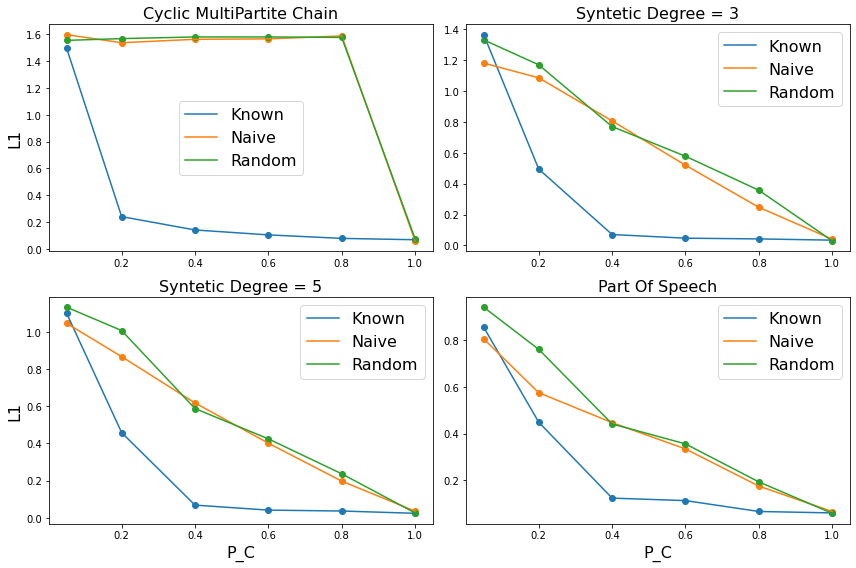}
      \caption{Reconstruction with the \emph{Naive} model and with random allocations of missing observations locations.
    \label{Suppfig:Intro}}
\end{center}
\end{figure}

\begin{figure}[h]
\begin{center}
    \includegraphics[scale=0.3]{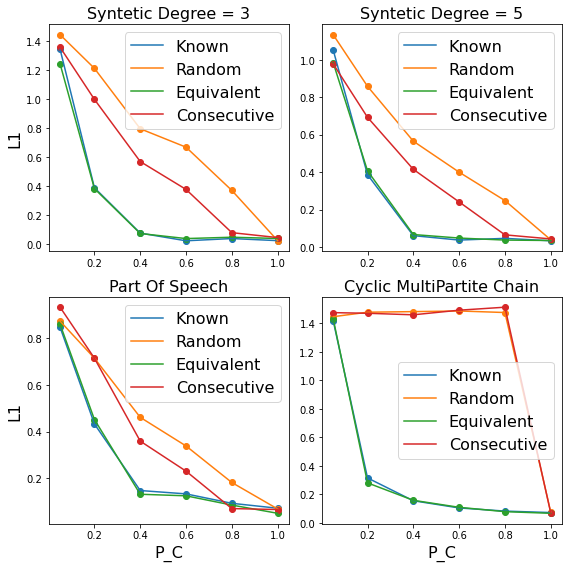}
    \caption{Reconstruction under different permutations of missing observation locations. X-axis is the $p_c$ of $Phi_{p_c}$, Y-axis is the L1 distance from the original T. 
    \label{fig:wrongW}}
\end{center}
\end{figure}

\section{Handling missing observations in HMM} \label{suppHMMKNOWN}
Missing observations are an integral part in the practical use of HMMs. This section presents a general approach of handling missing observation in the case of known locations.

Lets define a simple discrete Markov model 
where $O_{1:N} = (O_1, . . . , O_N )$ denote a time series of observations, and let $\Theta$ denote a vector of parameters. A hidden Markov model associates observations with a time series of hidden (or latent) discrete states $X_{1:N} = (X_1, . . . , X_N )$. $T_{X_t,X_{t - 1}}= p(X_t|X_{t−1})$ is the transitions probabilities between states. The joint distribution of observations and
states can be stated as
\begin{equation}
    P(O_{0:T},X_{0:T} ,\Theta) = P(X_0|\Theta)P(O_0|X_0,\Theta) \prod_{t=1}^{T} T_{X_t,X_{t-1}} \cdot P(O_t|X_t,\Theta)
\end{equation}
Now let us assume some of the observations $O_{1:T}$ are missing.  $O^{miss}$ will be the set of all missing observations and $O^{obs}$ the set for all observed observations. We can redefine as follow : \\

\begin{equation}
P(O_t|X_t,\Theta) = \mathbb{I}_{O_t \in O^{obs}}P(O_t|X_t,\Theta) + \mathbb{I}_{O_t \in O^{miss}} P(O_t = \emptyset|X_t,\Theta)    
\end{equation}

In our case the process removing the observations is dependent of  $\Theta$,$X$ or $O$ (non-ignorable) : 

\begin{equation}
\begin{split}
    = \mathbb{I}_{O_t \in O^{obs}} P(O_t|X_t,\Theta) \cdot (1-\Psi(X_t)) + \mathbb{I}_{O_t \in O^{miss}} \Psi(X_t)
\end{split}
\end{equation}

Hence, if we define $W[w_1,w_2,...]$ as a mapping between the $O_t \in O^{obs}$ to the corresponding time t (as in man paper), we can write: 
\begin{equation}\label{eq:eq5}
    \begin{split}
         P(X, T, \Theta|W, \mathcal{O},\Psi)& = \\  P(X_0|\Theta)P(O_0|X_0,\Theta)\cdot  &\prod_{t=1}^{N} T_{X_t,X_{t-1}} \cdot \begin{cases}
    (1-\Psi(X_t)) \cdot P(O_t|X_t,\Theta) , & \text{if}\ t \in W_i \\
  \Psi(X_t), & \text{otherwise}
\end{cases}
    \end{split}
\end{equation}

\section{Matching Sampler}\label{GibbsSamplerKnownN}
This section contains a fully detailed Gibbs sampler for the case of reconstructing HMMOPs with known N. The section begin with addressing the initialization of the sampler for HMMOPs reconstruction, proceed to describe the "known location" Gibbs sampler while assuming W is known, and finally described the process of sampling W with more details.

Since the precise nature of emissions is not critical for the general results, from now on we will concentrate on the case of HMM with scalar Gaussian emissions, $\Theta_{x_n}= N(\mu_n,\sigma)$, where $\mu_n$ is a learnable parameter.

We starts with presenting the parameters of our model :
\begin{itemize}
    \item $T$ is the matrix which holds the probabilities to move from state $X_i$ to $X_j$, with a Dirichlet distribution prior.
    \item $ \mathcal{X}$ is the set of all latent states sequence $X_i$.
    \item $\mathcal{W}$ is the set of all mapping $W_i$ corresponding to $X_i$, $O_i$.
\end{itemize}
The external parameters of the model are : 
\begin{itemize}
    \item $\Set{N_i}$  the expected length of each $|U_i| $ (or $|X_i|$ equivalently) . 
    \item $\sigma_j$ the standard deviation(S.D) of the distributions of X's emissions.
    \item $\{O_i\}$ Our input (Missing sentences). 
    \item $\Psi$ are the omitting probabilities.
\end{itemize}

For simplicity, given that we sample each $W_i$ or $X_i$ independently, this section proceed with the notations -  $O = [o_0,\ldots, o_K] := O_i$, $W = [w_0,\ldots, w_K] := W_i$ , $X = [x_0,\ldots, x_N] := X_i$. Also, we define $W^k$ as the sequence $W$ when the $w_k$ is omitted, and $X^{[a,b]}$, $O^{[a,b]}$ as the sub-sequence of $X$, $O$ starting with index $a$ and end with $b$.

\subsubsection{Calculating Initial Values}\label{CalculatingInitialValues}
\cite{10.1214/08-BA326} offers the following steps for drawing initial conditions for $\mu$ given Normal prior and where $\hat{O_i}$ is the "full observed trajectory" ($U_i$, not exist in our case)-
Each $\mu_i$ is given an independent Normal prior $N(\xi, \kappa^{−1})$ with $\xi = \frac{min_t(\hat{O}_i,_t) + max_t(\hat{O}_i,_t)}{2}$
and $\kappa = \frac{1}{R^2}, R = max_t(\hat{O}_i,_t) - min_t(\hat{O}_i,_t)$ . 
Because we cant map observations to times without W, we cant directly draw the initial conditions as in the simple case. So, we used an unsupervised approach to asses the mapping between observations and distributions. Given the prior of Normally distributed states, our observations are a set of samples drawn from the GMM $\sum_l^{|D|} d_l$, so for assigning observations to distributions we used an EM algorithm for GMM .
Then, for $\{O_l\}$ all the observations assigned to distribution $d_l$, we calculate $\xi_l = \frac{min(O_k) + max(O_k)}{2} $
and $\kappa_l = \frac{1}{R^2}, R = max(O_k) - min_t(O_k)$. \\
Initial conditions for $T$ and $W$ are drawn in random .

\subsubsection{Sampling Conditional Distributions for known W }
Gibbs sampler is a common algorithm for HMM reconstruction, which allow us to draw samples from the posterior distribution of the HMMs (or in our case \ref{eq:eq5}). Here we will follow the work of \cite{10.1214/08-BA326}.
The idea beyond Gibbs sampling is alternating between
sampling model parameters and latent data from their respective full conditional distributions.
This is because, given the latent Markov chain and the data, the parameters are
conditionally independent. And vice versa, given the parameters and the data the latent process is a non-homogeneous Markov chain and hence simple to sample.
The basic steps for sampling the posterior $P(T, X, \Theta, W|\mathcal{O}, N, \Psi)$ for the reconstruction are 1. start with initial values for the parameters 2.sample a latent sequence X form the posterior given the parameters 3. sample the parameters form their conditional distribution, X and the observations. 
\\

\textit{\textbf{Sampling T.}}
\\
As we defined earlier, $T$ the transitions probabilities are assumed to be drawn from an $N$X$N$ Dirichlet distribution, so for sampling T we sample the distribution : 
\begin{equation}
 T|X,W,\mu = T|X \sim Dir(\sum_i n_{i1} + 1,  \sum_i n_{i2}  + 1, . . . , \sum_i n_{id}  + 1)   
\end{equation}
where $n_{ij}$ is the number of transitions from state i to state j over all X .

\textit{\textbf{Sampling $\mu$.}}
\\
Notice that for sampling $\mu$, T is no longer needed. $\mu_x$, the mean of the emissions distributions of state x, deepens on the observations correspond to X, hence, depends on both X and W. From \cite{10.1214/08-BA326} we know : 
\begin{equation}
\mu|... \sim N(\hat{\mu},\hat{\sigma}) ;
    \hat{\mu} = (S_i+\kappa \xi)/(n_i+\kappa) ;
    \hat{\sigma} = 1/(n_i+\kappa)
\end{equation}
Where: 
$$S_n = \sum_{i,k} O_{ik} \cdot \delta_{X_{iW_k} , d_n} $$ \footnote{$\delta_{xy}$ is the Kronecker Delta} 

\textit{\textbf{Sampling $X$.}}
\\
Given $\Theta$,T and $O_i$, $X|\ldots$ can be modeled as a (non-homogeneous) Markov chain with initial distribution:  
\begin{equation}
 P(X_0| . . .) \propto \widehat{N}(O_{0}| X_0 ) \cdot P(O^{[2,n]}|X_0) \ . \    \widehat{N}(O_{it| X_j )} =
\begin{cases}
(1-\Psi(X_t)) \cdot N(O_{it}| X_j ) &if \ t \in W_i \\ 
\Psi(X_j) &else
\end{cases}
\end{equation}
and transition probabilities:
$$P(X_k| X_{k - 1}) \propto T_{X_k,X_{k-1}} \cdot \widehat{N}(O_{k}| X_k) \cdot P(O^{[k+1,N]}|X_k) $$

For sampling X, we use an approach similar to the "forward backward" algorithm, called the "backward recursion forward sampling" algorithm \cite{CHIB199679}. The algorithm sample X from the Markov chain similarly to how F-B algorithm sample observations from an HMM. 

Finally, Our sampling algorithm is presented in algorithm \ref{algo:GibbsWithN}

\begin{algorithm}[tb]
	\caption{Gibbs sampler given N}
	\label{algo:GibbsWithN}
	\begin{algorithmic}
		\STATE infer $\xi $ and $\kappa$ from $\mathcal{O}$ and calculate $\mu$
		\STATE Build random T and W
        \STATE sample $X|\mathcal{O},T,W,\mu$.
	    \REPEAT
	    \STATE sample $\mu|X,\mathcal{O},W,\kappa,\xi,\sigma$.
	    \STATE sample $T|X$.
	    \STATE sample $W|X,\mathcal{O},\mu,\sigma,\Psi$.
        \STATE sample $X|T,W,\mathcal{O},\mu,\Psi$.
	    \UNTIL{convergence}
	\end{algorithmic} 
\end{algorithm}

\subsubsection{Sampling W}
Given the independence between the conditional distributions of each of the parameters in the Gibbs sampler, the difference between sampling $P(\mathcal{X},T,\Theta,\mathcal{W}|\mathcal{O},\Psi)$ instead of $P(\mathcal{X},T,\Theta|\mathcal{W},\mathcal{O},\Psi)$ lay in sampling $P(\mathcal{W}|T,\mu,\mathcal{X},\mathcal{O},\Psi)$. 

As described in Section \ref{ProblemSetup}, $\mathcal{W}$ are independent. Also, given $X_i$, T is no longer needed for the conditional distribution of $W$. So, our problem become sampling:
\begin{equation} \label{PW_1}
  P(W = [w_0,\dots, w_K]|\Theta,X,O,\Psi)  
\end{equation}

For sampling from \eqref{PW_1} we used a Gibbs sampler once again, that is, instead of sampling $W$ from \eqref{PW_1}, we iteratively sample $w_k$ from : 
\begin{equation} \label{supp:PWk_1}
  P(w_k|W^k,\Theta,X,O,\Psi)  
\end{equation}

Given $w_{k-1}<w_k<w_{k+1}$, the probability \eqref{supp:PWk_1} is of mapping a single observations $O_{k}$ to one of $X^{[w_{k-1},w_{k+1}]}$. And given the Markov property of X and that $Phi(\cdot)$ is a memoryless process, this mapping is independent from $w_{k'}, X_{k'} , O_{k'}$ for $k' \not\in \{k-1,k,k+1\}$. So \eqref{supp:PWk_1} become :
$$ P \left({w_k} \middle| O_{k},X^{[w_{k-1},w_{k+1}]},\Psi \right) \cdot  I_{[w_{k-1},w_{k+1}]}(w_k) \footnote{$I_{[w_{k-1},w_{k+1}]}(w_k)$ is the indicator function}$$

\begin{equation*} 
\begin{split}
 = \frac{P(O_{k}|X^{[w_{k-1},w_{k+1}]},w_k)}{P(O_{k}|X^{[w_{k-1},w_{k+1}]})}  \cdot P({w_k}|X^{[w_{k-1},w_{k+1}]})
 \cdot I_{[w_{k-1},w_{k+1}]}(w_k) \\
 = \frac{P(O_{k}|  X_{w_k})}{P(O_{k}|X^{[w_{k-1},w_{k+1}]})}  \cdot \left( \frac{(1-\Psi(X_{w_k}))}{\sum_{j}^{[w_{k-1},w_{k+1}]} (1-\Psi(X_{j}))}\right)
 \cdot I_{[w_{k-1},w_{k+1}]}(w_k)
\end{split}
\end{equation*}

and the denominator is independent of $w_k$ :
\begin{equation}
    \propto P(O_{k}|  X_{w_k}) \cdot (1-\Psi(X_{w_k})) 
  \cdot I_{[w_{k-1},w_{k+1}]}(w_k)  
\end{equation}

\subsubsection{M-H Algorithm } \label{MHAlgo}
For better convergence we used a special case of Gibbs sampling called the M-H algorithm \cite{zbMATH03349185}. M-H is a special case of Gibbs sampling where the update from the new iteration $W^{t+1}$ is conditioned with an \emph{acceptance ratio } $\alpha = \frac{P(W^{t+1}|X_i,O_i)}{P(W_{t}|X_i,O_i)}$. 
Lets start with describing $P(W|O_i,X_i)$  : 
\begin{equation}
P(W|{O_i},{X_i}) = \frac{{P({O_i}|W,{X_i}) \cdot P(W|{X_i})}}{{P({O_i}|{X_i})}}
\end{equation}
\textbf{For the ignorable case} where $\Psi$ is independent of X we have : 
\begin{equation}
= \frac{{P({O_i}|W,{X_i}) \cdot p_c^{|O|} \cdot (1-p_c)^{(|X|-|O|)} }}{{P({O_i}|{X_i})}}
\end{equation}
And because $P({O_i}|{X_i})$ is independent of W : 
\begin{equation}
\propto P(O_i|W,X_i) \cdot p_c^{|O|} \cdot (1-p_c)^{(|X|-|O|)}
\end{equation}

Given $P(O_i|W,X_i)$ is simply the probability of observation sequence $O_i$ on state sequence $X_i$ with known mapping, we can say:   
\begin{equation}
   \alpha = \frac{p_c^{|O|} \cdot (1-p_c)^{(|X|-|O|)} \cdot \prod\limits_{k = 0}^{K - 1} {P({o_k}|{X_{i{w^{t+1}_k}}})}}{p_c^{|O|} \cdot (1-p_c)^{(|X|-|O|)} \cdot \prod\limits_{k = 0}^{K - 1} {P({o_k}|{X_{i{w^t_k}}})}}  = \frac{\prod\limits_{k = 0}^{K - 1} {P({o_k}|{X_{i{w^{t+1}_k}}})}}{\prod\limits_{k = 0}^{K - 1} {P({o_k}|{X_{i{w^t_k}}})}} 
\end{equation}

\textbf{For the non-ignorable case} : 
Because $P({O_i}|{X_i})$ is independent of W, and given $P(O_i|W,X_i)$ is simply the probability of observation sequence $O_i$ on state sequence $X_i$ with known mapping : 

\begin{equation}
   \alpha = \frac{P(W^{t+1}|{X_i}) \cdot \prod\limits_{k = 0}^{K - 1} {P({o_k}|{X_{i{w^{t+1}_k}}})}}{ P(W^t|{X_i}) \cdot \prod\limits_{k = 0}^{K - 1} {P({o_k}|{X_{i{w^t_k}}})}}  
\end{equation}

While $P(W^{t+1}|{X_i}), P(W^{t+1}|{X_i})$ are hard to evaluate, the difference between them is only one mapping $w$. Lets assume $w_j$ is the mapping which been updated in time $t$ : 
\begin{equation}
   \alpha = \frac{(1-\Psi({X_{w^{t+1}_j}})) \cdot \prod\limits_{k = 0}^{K - 1} {P({o_k}|{X_{i{w^{t+1}_k}}})}}{(1- \Psi({X_{w^{t}_j}})) \cdot \prod\limits_{k = 0}^{K - 1} {P({o_k}|{X_{i{w^t_k}}})}}  
\end{equation}

The full sampling algorithm is described in Algorithm \ref{alg2}. 

\begin{algorithm}[ht] 
	\caption{M-H sampler For W}
	\label{alg2}
	\begin{algorithmic}
		\STATE Initial $W^0$ randomly 
	    \FOR{i=0,1...,Number of iterations}
	    \STATE $W^{t+1} = W^t$
	    \FOR { k = 0,1,...,K}
	    \STATE $W_k^{t+1} \sim P(w_k|w_{k-1}^{t+1},w_{k+1}^t)$
	    \ENDFOR
	    \STATE calculate $\alpha = \frac{P(W^{t+1})}{P(W^{t})}$
	    \STATE sample u from uniform distribution over [0,1]
	    \IF{u $\leq \alpha$}
	    \STATE $W^{t}=W^{t+1}$
	    \ELSE 
	    \STATE $W^{t+1} = W^t$
	    \ENDIF
	    \STATE $W^{t} = W^{t+1}$
	    \ENDFOR
	\end{algorithmic} 
	
\end{algorithm} 

\section{Gap sampler}\label{supp:samplingd}

Lets $M = (\mathbb{X},T,\Theta)$ be an HMMOP, and $M_d = (\mathbb{X},T^d,\Theta)$ be an HMM where $T^d$ is the d-step transition matrix of transition matrix $T$. $D_{\Psi}(X_i,X_{i+1})$ is a random variable distributed according to the distances between two observable states $X_i,X_{i+1}$. 
From now on, we define $d_i = [d_{i0},\ldots,d_{iK}] $ as the sequence of intervals. Note that $d_{ik}$ is corresponding to $ w_{k+1} - w_k$ in the previous notation, as the number of gaps between two observed states in the original full states sequence $U_i$. $\mathcal{D}$ is the set of all $d_i$. As before, we omit the index i given the formulation is the same between sequences.

When conditioning over d, the posterior distribution of the HMMOP can be written as :    
\begin{equation}
    P(X,T,\Theta |d, O, \Psi) = 
    P(X_{0}|\Theta,d_0) \cdot  P(O_{0}|X_{0},\Theta)
     \cdot  \prod_k^{K_i} P(X_{{k+1}}|X_{k},d_k) 
     \cdot P(O_{{k+1}}|X_{{k+1}},\Theta) 
\end{equation}

\textbf{\textit{Samplind d :}}
Our goal is to calculate : 
$$P(d_k |(X_k,X_{k+1})) $$
Lets define $s = [s_0,\ldots,s_{d-1}] \in S^d$ as the set of all $d$ long states trajectories. Also lets assume d is limited to be no longer then a predefined number $\mathcal{S}$. We say that a sequence is \emph{fully omitted} if we didn't observed any state from the sequence. We denote the case of a fully omitted s as $\Bar{s}$. The probability of $s$ to be $\Bar{s}$ :
$$ P(\Bar{s}|s) = \prod_{i=0}^{d-1} \Psi(s_i) $$

By definition, $P(d_k |(X_k,X_{k+1}))$ is the probability for a sequence of length $d$ (hence $s \in S^d$), been fully omitted, between $(X_k,X_{k+1})$.
\begin{equation}
\begin{split}
 \frac{\sum_{s \in S^d} 
   P(s|(X_k,X_{k+1})) \cdot P(\Bar{s}|s)}{\sum_{\tau = 0}^{\mathcal{S}} \sum_{s' \in S^{\tau}} P(s'|(X_k,X_{k+1}))\cdot P(\Bar{s'}|s')}  = \\
 \frac{\sum_{s \in S^d} T[X_k,s_0] \cdot T[s_{d-1}, X_{k+1}] \cdot \Psi(s_0) \cdot \Pi_{i=1}^{d-1} T[s_{i-1}, s_{i}] \cdot \Psi(s_i)
  }{\sum_{\tau = 0}^{\mathcal{S}} \sum_{s' \in S^{\tau}} P(s'|(X_k,X_{k+1}))\cdot P(\Bar{s'}|s')}
\end{split}
\end{equation}

Given the limited number of d's, we only need the probability up to proportion : 
\begin{equation}\label{p_d}
    \propto \sum_{s \in S^d} 
  T[X_k,s_0] \cdot T[s_{d-1}, X_{k+1}] \cdot \Psi(s_0) \cdot \Pi_{i=1}^{d-1} T[s_{i-1}, s_{i}] \cdot \Psi(s_i)
\end{equation}

Lets present the case of $\Psi(\cdot) = (1-p_c)$ : 
\begin{equation}
\begin{split}
    = \sum_{s \in S^d} 
  T[X_k,s_0] \cdot T[s_{d-1}, X_{k+1}] \cdot (1-p_c) \cdot \Pi_{i=1}^{d-1} T[s_{i-1}, s_{i}] \cdot (1-p_c) \\
  =  (1-p_c)^{d-1} \cdot \sum_{s \in S^d} 
  T[X_k,s_0] \cdot \Pi_{i=1}^{d-1} T[s_{i-1}, s_{i}] \cdot T[s_{d-1}, X_{k+1}]  \\
  =(1-p_c)^{d-1} \cdot T^{d}[X_k, X_{k+1}] 
\end{split}
\end{equation}

Back to the general case, we evaluate \ref{p_d} using the forward algorithm. We can say that : 
$$P(d|(X_k,X_{k+1})) = \sum_{x \in \mathcal{X}} \sum_{j \in \mathcal{X}} P(d-1|(X_k,X_{k+1}),s_{d-2} = j) \cdot  T[j,x] \cdot (1- \Psi(x))$$
While :
$$P(d-1|(X_k,X_{k+1})) = \sum_{j \in \mathcal{X}} P(d-1|(X_k,X_{k+1}),s_{d-2} = j)$$

As we can see this is the form of the forward algorithm with $P(d|(X_k,X_{k+1}),s_{d-1} = j)$ as the forwarding elements.
Notice that $P(1,\ldots, d-1)$ are intermediate steps for calculating $p(d)$ in the forward algorithm, hence the complexity of calculating $P(1), \ldots, P(\mathcal{S})$ is equal to the complexity of calculating $P(\mathcal{S})$.\\ 

\textbf{\textit{Sampling X :}}
Given the equivalence between $\{ d \}$ to $\{ w \}$, we sample $X$ in the same way as with the Matching sampler.

\textbf{\textit{Sampling T :}}
A challenge in the new representation is sampling $T|X,d, \Psi$. Notice that even in the ignorable case, X contains samples from $T^d$ rather than T, and because of the difficulty in finding roots for stochastic matrices \cite{HIGHAM2011448} and the sparsity of most of $T^d$, it is hard to calculate $T^d$ explicitly in order to sample T. For those reasons, we use the fact that $w_k= \sum_{j=0}^{j=k} d_j$ and $N = \sum d_k$ and proceed as with sampling $P(T|X,\Theta,W,O,N)$. 

Algorithm \ref{algo:reconsKnownPC} describes the full Gibbs sampler.

\begin{algorithm}[tb] 
	\caption{Gibbs sampler given PC}
	\label{algo:reconsKnownPC}
	\begin{algorithmic}
		\STATE infer $\xi $ and $\kappa$ from $\mathcal{O}$ and calculate $\mu$
	    \STATE sample T from a uniform Dirichlet prior 
        \STATE sample $X_{walk}|\mathcal{O},T,\mu$.(naive sampling given T )
        \STATE sample $d|T,X_{walk},p_c$
	    \REPEAT
	    \STATE sample $\mu|X_{walk},\mathcal{O},\kappa,\xi,\sigma$
	    \STATE build W,N from d
        \STATE sample $X_T|W,N,T,\mathcal{O},\mu ,\Psi$
        \STATE sample $T|X_T$
        \STATE sample $X_{walk}|T,d,\mathcal{O},\mu,\Psi$
        \STATE sample $d|T,X_{walk},\Psi$
	    \UNTIL{convergence}
		
	\end{algorithmic} 
\end{algorithm}

\section{Inference}\label{Supp:Inference}
A Common use of Hmm is sequence labeling (or inference, interchangeably). That is, given observations sequence $O$ and an HMM with known parameters we aim to find  $X_{ml} = argmax_X \ P(X|W,O)$. This section provide a inference method for sequence drawn from a HMMOP, so our aim is to find : 
\begin{equation}\label{Infer}
    X_{ml} = argmax_X \ P(X,W|O)
\end{equation}
Notice, that this case refer to a scenario where T from the fully observed HMM is known, but, the sentences have been drawn from the corresponding HMMOP. Given HMMOP is an HMM (\ref{MMM}), the best representation for the dynamics of the labels drawn from the HMMOP is $T_{r}$. So, this case reefer to a scenario where $T_r$ cannot be learn, for example because of lack of training data, but the "original" T is available. 

Many algorithms exists for the case of known W. So, for solving \eqref{Infer} we use an iterative Expectation-maximization (E-M) algorithm based on the sub-problem :
\begin{equation}
\begin{split}
    W_{ml} = argmax_W \ P(W|X,O)
\end{split}
\end{equation}
As before, $W=[W_0,\ldots,W_K]$ are mappings between $O_k \in O$ to $X_n \in X$. So, our goal is to find:
\begin{equation}\label{argW}
    \begin{split}
          argmax_W \ \prod_{k=0}^{K-1} P(O_k|X_{W_k}) =& \ argmax_W \sum_{k=0}^{K-1} log(P(O_k|X_{W_k})) \\ 
        W_{k-1} < W_{k}& < W_{k+1}
    \end{split}
\end{equation}
This problem can be represented as finding longest path on a directed acyclic graph (DAG). Lets define a DAG $G= (S,T)$ where $s_{a,b} \in S, a \in [0,K-1], b \in [0,N-1]$ are the nodes with $r_{a,b}$ as weights and $T'$ as the binary matrix representing the edges :
\begin{equation}\label{nodesum}
r_{a,b} = 
\begin{cases}
log(P(O_a|X_{b})) \ \text{ ,if }  b \geq k \\
-\infty \ \ else 
\end{cases}     
T[s_{a_i,b_i},s_{a_j,b_j}] = 
\begin{cases}
1 \ \text{ ,if } \ (a_j - a_i) == 1  \And b_j > b_i \\
-\infty \ \ else 
\end{cases}
\end{equation}

Now lets consider the problem of finding the longest path $V' = [s'_{a_o,b_o},\ldots,s'_{a_K,b_k}]$ on $G$ that starts in one of $s_{0,:}$ and ends in one of $s_{:,N}$. Given $T'$, only transitions with consecutive $a$ are possible, so, $|V'| = K$. Also, $V'$ is ordered for $b$, so $V'$ is a sequence maximizing a sum of weights \eqref{nodesum}, where $a$ are consecutive and $b$ are ordered. In other words, by taking $b$ for each $s' \in V'$ we get the optimal W. Figure \ref{fig:illoptimal} present an illustration of the process. 

For better initialization, we calculated an initial $W_{ml}$ instead of using random allocation. Given T, $P(d_k=n|X_k,X_{k+1})$ is given by \ref{lemma_d}. Our goal is to solve : 
\begin{equation}
\begin{split}
argmax_d \prod_{k=0}^{K} P(d_k|X_k,X_{k+1}) &= argmax_d \sum_{k=0}^{K} log(P(d_k|X_k,X_{k+1})) \\ 
d_{K+1} &+ \sum d_k  < N         
\end{split}
\end{equation}

where $d_{K+1}$ is the probability for d gaps from $X_K$ to end. This problem is known as the multi-choice knapsack problem where one pick a single item $d_k$ from {1,...,N-K} per k, that maximizes a "benefit" ($\sum_k P(d_k)$) with constrain over some "cost" ($\sum d_k < N$).
The details of the multi-choice knapsack algorithm are in the code.

\begin{figure}
    \centering
    \includegraphics[scale=0.5]{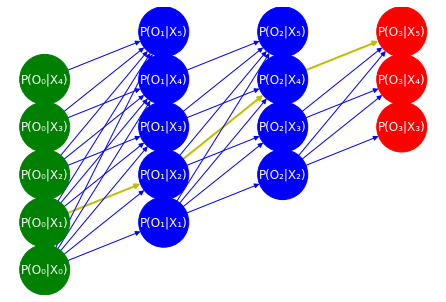}
    \caption{We want to find the longest path that start in green node and end in red node. In this case. Here [$log(P(O_0|X_1)),log(P(O_1|X_2)),log(P(O_2|X_4)),log(P(O_3|X_5))$]  so $W_{ml} = [1,2,4,5]$.} 
    \label{fig:illoptimal}
\end{figure}

\subsection{Inference results - Sequence Labeling}
As presented in \ref{Supp:Inference}, the reference algorithm receive 3 inputs - 1) a transition matrix T (derived from a fully observed HMM). 2) missing sentences 3) assumption for the percentage of missing observations, in our case the original sentence length N. Than, the algorithm returns a sequence of states of the length of the observations sequence. 

For this experiment we sampled sentences from the models in \ref{Supp:EvaluationsModel} and remove observation according to $p_c = .5$. Than, we predict the labels of each sentence and compare them to the known ones. The evaluation measures are the mean and variance of the accuracy across sentences. We compare the results of five algorithms: 1) naive prediction on the full sentences (named ``Full sentence"). 2) Based on the emissions probabilities only (``Emissions only"). 3) Missing sentences when the gaps locations are known (``Naive"). 4) missing sentences using a naive prediction. 5) missing sentences using the HMMOP inference with known N(``HMMOP"). 

Given the sensitivity of the label prediction task to the emissions probabilities, for the Gaussian models we compare the results for different $\sigma s$. As Table \ref{sample-table} present, the HMMOP give better results under all scenarios. Also, its interesting to see that when ignoring the missing observations, one better relay on the emissions alone rather using the wrong dynamical data. 

\begin{table}
  \caption{Label prediction results}
  \label{sample-table}
  \centering
  \begin{tabular}{lllllll}
    \cmidrule(r){2-7}
    \multicolumn{4}{c}{Synthetic Degree = 5 } & \multicolumn{3}{c}{Synthetic Degree = 3 }  \\
    \cmidrule(r){2-7}
    STD & 0.5 &  0.75 & 1.0 & 0.5 &  0.75 & 1.0 \\
    \cmidrule(r){2-7}
    Full sentence & .87(.08) & .80(.11)  & .64(.10) & .96(.05) & .98(.05)  & .82(5.6) \\
    Emissions only& .69(.15) & .57(.15)  & .42(.17) & .51(.15) & .72(.13)  & .43(5.6) \\
    Known W       & .79(.15) & .65(.16)  & .58(.17) & .90(.12) & .89(.10)  & .60(5.6)  \\
    Naive         & .66(.15) & .52(.17)  & .49(.18) & .80(.21) & .49(.18)  & .38(5.6)  \\
    HMMOP         & .73(.14) & .59(.14)  & .50(.17) & .84(.16) & .73(.15)  & .45(5.6)  \\
    \cmidrule(r){2-7}
  \end{tabular}
  \begin{tabular}{llllll}
    
    \multicolumn{3}{c}{Multi-Partite} & \multicolumn{3}{c}{POS}  \\
    \cmidrule(r){1-6}
     0.5     &  1.0     & 1.5  \\
    \midrule
    .91(.06) & .78(.09)  & .69(.12) & .91(.06)  \\
    .71(.15) & .50(.15)  & .40(.16) & .89(.11)  \\
    .83(.13) & .64(.17)  & .57(.18) & .90(.10)  \\
    .69(.16) & .48(.18)  & .40(.17) & .88(.11)  \\
    .72(.16) & .50(.16)  & .40(.15) & .89(.12)  \\
    \bottomrule
  \end{tabular}
\end{table}

\section{Analytical Reconstruction - Ignorable Case}\label{suppFB}
Given $OP(\cdot)$, the \emph{Naive} algorithm reconstruct $T_{missing}$ which can be described as follow: 
\begin{equation}\label{suppeq1}
\begin{split}
    T_{missing}(N) \propto p_c \cdot T_{ab} + p_c \cdot \sum_{n=2}^{N} (1-p_c)^{n-1} \cdot T^n_{ab} 
   \\ = p_c \cdot T_{ab} \cdot [\mathbb{I} + \sum_{n=2}^{N} (1-p_c)^{n-1} \cdot T^{n-1}_{ab}]
   \\ = p_c \cdot T_{ab} \cdot [\mathbb{I} + \sum_{n=1}^{N} [(1-p_c) \cdot T_{ab}]^{n}]
   \\ = p_c \cdot T_{ab} \cdot \sum_{n=0}^{N} [(1-p_c) \cdot T_{ab}]^{n}
\end{split}
\end{equation}
And after normalization 
\begin{equation}\label{suppeq2}
\begin{split}
    T_{missing}(N) &= [PC \cdot \sum_{n=0}^{N} [(1-p_c)]^{n}]^{-1 }  \cdot  p_c \cdot T_{ab} \cdot \sum_{n=0}^{N} [(1-p_c) \cdot T_{ab}]^{n} \\ &= [PC \cdot \frac{1-(1-p_c)^{N+1}}{p_c} ]^{-1 }  \cdot  p_c \cdot T_{ab} \cdot \sum_{n=0}^{N} [(1-p_c) \cdot T_{ab}]^{n} \\ &= \frac{p_c}{1-(1-p_c)^{N+1}}  \cdot T_{ab} \cdot \sum_{n=0}^{N} [(1-p_c) \cdot T_{ab}]^{n}
\end{split}
\end{equation}
Where $N$ is the number of gaps in the sentence.
The highest eigenvalue of the stochastic matrix $T_{ab}$ is 1, so for $p_c \in [0,1) \xrightarrow[]{} |p_c \cdot T_{ab}| < 1$  and we can write : 
\begin{equation}
    T_{missing}(N) = \frac{p_c}{1-(1-p_c)^{N+1}} \cdot T_{ab} \cdot [\mathbb{I} - (1-p_c) \cdot T_{ab}]^{-1} [\mathbb{I} - (1-p_c)^NT^N]
\end{equation}
\begin{equation}\label{suppeq8}
    T_{missing}(N) \xrightarrow[N]{\infty} p_c \cdot T_{ab} \cdot [\mathbb{I} - (1-p_c) \cdot T_{ab}]^{-1} 
\end{equation}
\\

As \ref{suppeq8} shows, given any T and $p_c$, we can infer $T_{missing}$ the transition matrix given $OP(\cdot)$. Notice that we define this transformation as the "backward transformation" earlier.\\
For the "forward transformation", which is the transformation from ($T_{missing}$,$p_c$) to T : 
\begin{multline*} \label{eq:eqbackward}
    T_{ab} = [\mathbb{I} \cdot p_c + (1-p_c)T_{missing}[\mathbb{I} - [(1-p_c) \cdot T_{ab}]^N]^{-1}]^{-1}\\  \cdot T_{missing} \cdot [\mathbb{I} - [(1-p_c) \cdot T_{ab}]^N]^{-1}
\end{multline*}
\begin{equation}\label{eq:eq10}
    \xrightarrow[N]{\infty} [\mathbb{I} \cdot p_c + (1-p_c) \cdot T_{missing}]^{-1} \cdot T_{missing}
\end{equation}
From now on we will present the backward transformation as $T^{-p_c}$ and the forward transformation as $T^{p_c}$.


\begin{lemma} \label{Therom:statsDist}
$\pi$ the stationary distribution of $T_{ab}$ is equal to $\pi_m$ the stationary distribution of $T_{missing}$ for $N \xrightarrow{} \infty$
\end{lemma}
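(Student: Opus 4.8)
The plan is to check directly that $\pi$, the stationary distribution of $T_{ab}=T$, is also a stationary distribution of the limiting matrix $T_{missing}(\infty)=T_r=p_c\,T\,[\mathbb{I}-(1-p_c)T]^{-1}$ (cf. Proposition \ref{MMM} and \eqref{suppeq8}), and then to invoke uniqueness of the stationary distribution to conclude $\pi_m=\pi$.

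First I would carry out the one-line linear-algebra identity. Writing $\pi$ as a row vector with $\pi T=\pi$, note that $[\mathbb{I}-(1-p_c)T]$ is invertible (its eigenvalues are $1-(1-p_c)\lambda$ with $|\lambda|\le 1$ and $1-p_c<1$, hence nonzero), so $\pi T_r=\pi$ is equivalent to $\pi\,p_c T=\pi\,[\mathbb{I}-(1-p_c)T]$. The right-hand side is $\pi-(1-p_c)\pi T=\pi-(1-p_c)\pi=p_c\pi$, while the left-hand side is $p_c\,\pi T=p_c\pi$; the two coincide. Equivalently, one can use the series $T_r=\sum_{d\ge 1}p_c(1-p_c)^{d-1}T^d$ from \eqref{eq:T_r_sum} together with $\pi T^d=\pi$ to get $\pi T_r=\pi\sum_{d\ge 1}p_c(1-p_c)^{d-1}=\pi$. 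Either way, $\pi$ is stationary for $T_r$.

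Next I would address uniqueness. Under the standard irreducibility assumption on $T$, for every pair of states $(i,j)$ there is some $d\ge 1$ with $T^d[i,j]>0$, so the series representation shows every entry of $T_r$ is strictly positive; hence $T_r$ is irreducible (indeed primitive) and has a unique stationary distribution, which must therefore equal $\pi_m$, and so $\pi_m=\pi$. I would also record that $T_{missing}(N)\to T_r$ (in operator norm, as is already implicit in the derivation leading to \eqref{suppeq8}), so that ``$\pi_m$, the stationary distribution of $T_{missing}$ for $N\to\infty$'' is unambiguously the stationary distribution of $T_r$.

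The main obstacle is essentially bookkeeping rather than mathematics: the statement tacitly presumes both stationary distributions exist and are unique, so the only content beyond the trivial identity above is the observation that the omission map $\Phi_{\Psi}$ preserves irreducibility (and creates no periodicity), which the positivity argument settles cleanly. If one prefers not to assume irreducibility, the honest formulation is ``$\pi$ is \emph{a} stationary distribution of $T_{missing}(\infty)$'', proved by exactly the same identity, with uniqueness recovered on each recurrent class.
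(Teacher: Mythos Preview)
Your proof is correct and its core is essentially the paper's own argument: the paper verifies $\pi T_{missing}=\pi$ by writing $T_{missing}$ as the (normalized) series $K\,p_c\sum_{n\ge 0}(1-p_c)^n T_{ab}^{n+1}$ and using $\pi T_{ab}^{n+1}=\pi$ term by term, exactly your second computation. Your closed-form check via $\pi\,p_c T=\pi[\mathbb{I}-(1-p_c)T]$ and your explicit treatment of uniqueness (positivity of $T_r$ from the series, hence a unique stationary law) go a step beyond what the paper records, but they are add-ons rather than a different route.
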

\begin{proof}
Given $\pi$ the stationary distribution of $T_{ab}$, and from \eqref{eq7}($K = [PC \cdot \sum_{n=0}^{N} (1-p_c)^{n}]^{-1 }$ normalization factor) :
\begin{equation}
\begin{split}
    \pi \cdot T_{missing} = K \cdot p_c \cdot \sum_{n=0}^{N} (1-p_c)^n \cdot \pi \cdot T_{ab}^{n+1} \\
    = K \cdot p_c \cdot \sum_{n=0}^{N} [(1-p_c)^n \cdot \pi] \\
    = \pi \cdot K \cdot p_c \cdot \sum_{n=0}^{N} (1-p_c)^n 
    = \pi 
\end{split}
\end{equation}
\end{proof}
In fact we can say that given $T_m$ a stochastic matrix build as a polynomial of stochastic matrix T, $T_m = \sum_{i} a_i \cdot T^i$ if $\pi$ is a stationary vector for T it is also a stationary vector for $T_m$. 

\subsubsection{Morphism}
\begin{lemma} \label{lemma:morphism}
$(T^{-Q_0})^{-Q}$  = $T^{-Q \cdot Q_0}$ and $(T^{Q_0})^{Q}$  = $T^{Q \cdot Q_0}$
\end{lemma}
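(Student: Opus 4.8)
The plan is to reduce these matrix identities to a one–dimensional computation with Möbius transformations. Recall from the definitions that the backward transformation is $T^{-Q}=Q\,T\,[\mathbb{I}-(1-Q)T]^{-1}$ and the forward transformation is $T^{Q}=[Q\,\mathbb{I}+(1-Q)T]^{-1}T$. As in the proof of Proposition~\ref{MMM}, for a stochastic matrix $T$ and $Q\in(0,1]$ we have $\|(1-Q)T\|<1$, so $[\mathbb{I}-(1-Q)T]^{-1}=\sum_{k\ge 0}(1-Q)^kT^k$ is a convergent power series in $T$; likewise $[Q\,\mathbb{I}+(1-Q)T]$ has spectrum bounded away from $0$ in the relevant parameter range, and its inverse is again an analytic function of $T$. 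Hence every matrix occurring in the statement is a function of $T$, and in particular they all commute with one another — this is the only point requiring care, and it is settled by exactly the spectral-radius argument already used for \eqref{eq:T_r_sum}.

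First I would prove the backward identity $(T^{-Q_0})^{-Q}=T^{-Q\cdot Q_0}$ directly. Set $A:=T^{-Q_0}=Q_0\,T\,[\mathbb{I}-(1-Q_0)T]^{-1}$. Using that $T$ and $[\mathbb{I}-(1-Q_0)T]^{-1}$ commute,
\[
\mathbb{I}-(1-Q)A=\bigl\{[\mathbb{I}-(1-Q_0)T]-(1-Q)Q_0\,T\bigr\}[\mathbb{I}-(1-Q_0)T]^{-1}=\bigl[\mathbb{I}-(1-Q Q_0)T\bigr][\mathbb{I}-(1-Q_0)T]^{-1},
\]
where the one scalar identity $(1-Q_0)+(1-Q)Q_0=1-QQ_0$ is used. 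Inverting and substituting into $(T^{-Q_0})^{-Q}=Q\,A\,[\mathbb{I}-(1-Q)A]^{-1}$ gives
\[
(T^{-Q_0})^{-Q}=Q Q_0\,T\,[\mathbb{I}-(1-Q_0)T]^{-1}[\mathbb{I}-(1-Q_0)T][\mathbb{I}-(1-QQ_0)T]^{-1}=Q Q_0\,T\,[\mathbb{I}-(1-QQ_0)T]^{-1}=T^{-QQ_0}.
\]

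For the forward identity I would note that on a single eigenvalue $\lambda$ of $T$ the backward transformation acts by the Möbius map $g_Q(\lambda)=\tfrac{Q\lambda}{1-(1-Q)\lambda}$ and the forward transformation by its inverse $h_Q(\lambda)=g_Q^{-1}(\lambda)=\tfrac{\lambda}{Q+(1-Q)\lambda}$ (this inversion is precisely the forward/backward relation established above). The computation just performed is the statement $g_Q\circ g_{Q_0}=g_{QQ_0}$, and since composition of these commuting matrix functions corresponds to composition of the scalar maps, the forward identity follows from $h_Q\circ h_{Q_0}=(g_{Q_0}\circ g_Q)^{-1}=(g_{Q_0 Q})^{-1}=h_{QQ_0}$, using $Q Q_0=Q_0 Q$; alternatively one simply repeats the same one-line matrix algebra with the factorization $Q\,\mathbb{I}+(1-Q)T^{Q_0}=[Q Q_0\,\mathbb{I}+(1-QQ_0)T]\,[Q_0\,\mathbb{I}+(1-Q_0)T]^{-1}$.

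The only genuine obstacle is the bookkeeping: one must check that every inverse written down exists and that the matrices involved commute, so that the factorization $[\mathbb{I}-(1-Q_0)T]-(1-Q)Q_0T=\mathbb{I}-(1-QQ_0)T$ is legitimate and the common factor $[\mathbb{I}-(1-Q_0)T]^{-1}$ cancels cleanly. Once the power-series representation supplies commutativity and existence of the inverses, both identities are pure algebra driven by the single scalar relation $(1-Q_0)+(1-Q)Q_0=1-QQ_0$.
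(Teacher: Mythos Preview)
Your argument is correct. For the backward identity $(T^{-Q_0})^{-Q}=T^{-QQ_0}$ you carry out essentially the same direct matrix computation as the paper, driven by the scalar relation $(1-Q_0)+(1-Q)Q_0=1-QQ_0$; your presentation is in fact cleaner, since you explicitly justify commutativity via the power-series representation, whereas the paper manipulates the inverses without comment.

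For the forward identity the two proofs diverge. The paper does \emph{not} repeat the direct computation or invoke M\"obius maps; instead it leverages the inverse relationship $(S^{-p})^{p}=S$ between the forward and backward transformations. From the already-proved backward identity it applies the forward map twice to both sides, obtaining $((T^{-QQ_0})^{Q})^{Q_0}=T=(T^{-QQ_0})^{QQ_0}$, and then observes that since every matrix can be written as some $T^{-QQ_0}$, this yields $(S^{Q})^{Q_0}=S^{QQ_0}$ in general. Your M\"obius-map viewpoint is more conceptual---it identifies the one-parameter family $\{g_Q\}$ as a group under composition and reads off both identities simultaneously---while the paper's route has the virtue of needing no functional calculus at all, only the formal inversion already established. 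Your alternative direct factorization $Q\,\mathbb{I}+(1-Q)T^{Q_0}=[QQ_0\,\mathbb{I}+(1-QQ_0)T][Q_0\,\mathbb{I}+(1-Q_0)T]^{-1}$ is a third valid path, parallel to your first computation. One minor caveat: your claim that $[Q\,\mathbb{I}+(1-Q)T]^{-1}$ is an analytic function of $T$ via a Neumann series only works verbatim for $Q>\tfrac12$; for smaller $Q$ the inverse still exists generically (the obstruction is the isolated eigenvalue $-Q/(1-Q)$), but the power-series justification needs a slight upgrade (e.g.\ holomorphic functional calculus) to cover the full range.
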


\begin{proof}
\begin{equation}
    \begin{split}
        &(T^{-Q_0})^{-Q} = p_q \cdot T^{-Q_0} \cdot [\mathbb{I} - (1-p_q) \cdot T^{-Q_0}]^{-1} \\
        &=  p_q \cdot P_{Q_0} \cdot T_{ab} \cdot [\mathbb{I} - (1-P_{Q_0}) \cdot T_{ab}]^{-1} \cdot [\mathbb{I} - (1-p_q) \cdot P_{Q_0} \cdot T_{ab} \cdot [\mathbb{I} - (1-P_{Q_0}) \cdot T_{ab}]^{-1}]^{-1} 
        \\ &= p_q \cdot P_{Q_0} \cdot T_{ab} \cdot [[\mathbb{I} - (1-P_{Q_0}) \cdot T_{ab}] \cdot [\mathbb{I} - (1-p_q) \cdot P_{Q_0} \cdot T_{ab} \cdot [\mathbb{I} - (1-P_{Q_0}) \cdot T_{ab}]^{-1}]^{-1} 
        \\ &= p_q \cdot P_{Q_0} \cdot T_{ab} \cdot [[\mathbb{I} - (1-P_{Q_0}) \cdot T_{ab}] - (1-p_q) \cdot P_{Q_0} \cdot T_{ab}]^{-1}
        \\ &= (p_q \cdot P_{Q_0}) \cdot T_{ab} \cdot [[\mathbb{I} - (1-(p_q \cdot P_{Q_0})) \cdot T_{ab}]^{-1} = T^{-Q \cdot Q_0}
    \end{split}
\end{equation}
\\
\\
Following our resent prove : \\ \\ 
    $T^{-Q \cdot Q_0} = (T^{-Q_0})^{-Q}$ \\ 
    
We will use the forward transformation twice on both sides: \\ \\ 
$((T^{-Q \cdot Q_0})^{Q})^{Q_0} = (((T^{-Q_0})^{-Q})^{Q})^{Q_0}$ \\

More, we claim that $(T^{p_c})^{-p_c} = T$ because the forward transformation derived directly from the backward transformation so
$((T^{-Q \cdot Q_0})^{Q})^{Q_0} = T$ \\ 

Finally:
\begin{equation}
\begin{split}
    ((T^{-Q \cdot Q_0})^{Q})^{Q_0} &= T = (T^{-Q \cdot Q_0})^{Q \cdot Q_0} \\ 
    (T^Q)^{Q_0} &= (T)^{Q \cdot Q_0}
\end{split}
\end{equation}
\end{proof}

\subsubsection{Forward and backward composition}
We will start with calculating the composition of the forward to the backward transformations. This process can be described as trying to reconstruct T from $T_{missing}$ based on the wrong assumption of $\Phi_{\Psi}$ when the real process was $\Phi_{p_q}$  \\ \\
\begin{equation}
\begin{split}
    &(T^{-p_q})^{p_c} = [I \cdot p_c + (1-p_c) \cdot p_q \cdot T \cdot [I-(1-p_q) \cdot T]^{-1}]^{-1} \cdot p_q \cdot T \cdot [I-(1-p_q) \cdot T]^{-1} \\ 
    &((T^{-p_q})^{p_c})^{-1} = [I-(1-p_q) \cdot T] \cdot \frac{1}{p_q} \cdot T^{-1} \cdot [I \cdot p_c + (1-p_c) \cdot p_q \cdot T \cdot [I-(1-p_q) \cdot T]^{-1}] \\ 
    &= \frac{1}{p_q} \cdot [T^{-1}-(1-p_q) \cdot I] \cdot [I \cdot p_c + (1-p_c) \cdot p_q \cdot T \cdot [I-(1-p_q) \cdot T]^{-1}] \\
    &= \frac{1}{p_q} \cdot [T^{-1} \cdot p_c + (1-p_c) \cdot p_q \cdot T^{-1} \cdot T \cdot [I-(1-p_q) \cdot T]^{-1}\\
    & \ \ \ \ - I \cdot p_c \cdot (1-p_q) - (1-p_c) \cdot (1-p_q) \cdot p_q \cdot T \cdot [I-(1-p_q) \cdot T]^{-1}] \\
    &= \frac{p_c}{p_q} \cdot T^{-1} + [(1-p_c) - (1-p_c) \cdot (1-p_q) \cdot T] \cdot [I-(1-p_q)T]^{-1} - I \cdot (1-p_q) \cdot \frac{p_c}{p_q} \\
    &= \frac{p_c}{p_q} \cdot T^{-1} + (1-p_c)[I-(1-p_q)T] \cdot [I-(1-p_q)T]^{-1} - I \cdot (1-p_q) \cdot \frac{p_c}{p_q} \\
    &= \frac{p_c}{p_q} \cdot T^{-1} + I \cdot [(1-p_c) - (1-p_q) \cdot \frac{p_c}{p_q}] \\
    &= \frac{p_c}{p_q} \cdot T^{-1} + I \cdot \frac{p_q-p_c}{p_q} 
\end{split}
\end{equation}

\begin{equation}
\begin{split}
    &(T^{-p_q})^{p_c} = [\frac{p_c}{p_q} \cdot T^{-1} + I \cdot \frac{p_q-p_c}{p_q}]^{-1} \\&= [\frac{p_c}{p_q} \cdot T^{-1} + I \cdot (1-\frac{p_c}{p_q})]^{-1}
\end{split}
\end{equation}
\\
\\
Now we will use lemma \ref{lemma:morphism} to calculate $(T^{p_c})^{-p_q}$ by showing the forward and backward transformations are commutative : 
\begin{equation} \label{eq:FBComposition}
\begin{split}
    (T^{p_c})^{-p_q} = (((T^{-p_q})^{p_q})^{p_c})^{-p_q} &= \\
    (((T^{-p_q})^{p_c})^{p_q})^{-p_q} = (T^{-p_q})^{p_c} &= \\  [\frac{p_c}{p_q} \cdot T^{-1} + I \cdot (1-\frac{p_c}{p_q})]^{-1}&
\end{split}
\end{equation}
\\
\\
We can formulate a connection between T,$p_q$ and $T^{-p_q}$ from \eqref{eq:FBComposition}. Lets assume we observed $T^{-p_q}$ and someone gave us the transition matrix T, how can we infer $p_q$ ?  We can always use iterative method to solve this problem , but another option is to use the derivative of $\frac{d((T^{-p_q})^{p_c})^{-1}}{dp_c}$ which can be calculated regardless of the unknown $p_q$ ($p_c$,$p_c + \epsilon$ can be chose randomly):    \\ 

$\frac{d(((T^{-p_q})^{p_c})^{-1})}{dp_c} = \frac{1}{p_q} \cdot [T^{-1} - I]$. And :  \\
\begin{equation}
    \begin{split}
     p_q &= ({\frac{d(((T^{-p_q})^{p_c})^{-1})}{dp_c}})^{-1} \cdot [T^{-1} - I] \\
     T &= [p_q \cdot \frac{d(((T^{-p_q})^{p_c})^{-1})}{dp_c} + I]^{-1} 
    \end{split}
\end{equation}

Figure \ref{fig:stabilityBack} presents the sensitivity of the backward transformation to the $p_c$ used for the transformation. A Markov chain with 10 states was generated where the transition matrix $T_{r}$ was chosen at random(uniform). 1500 trajectories of length 100 were sampled proceeding by $\Phi_{\Psi}$ to build the empirical $T_{missing}$. Then, \eqref{eq:eqbackward} was used to reconstruct $T_{r}$.

\begin{figure}[ht]
\vskip 0.2in
\begin{center}
 \includegraphics[scale=0.2]{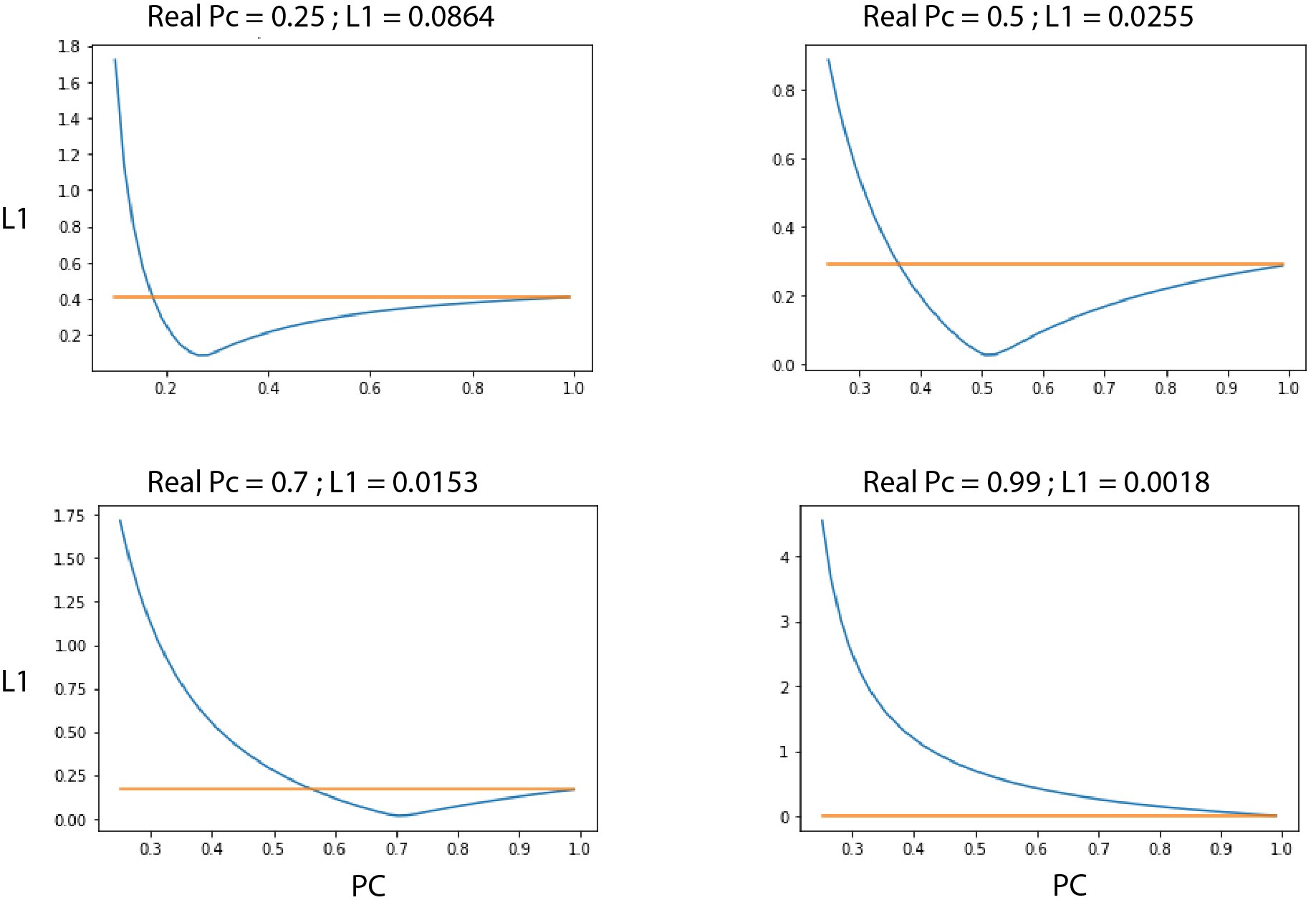}
    \caption{Backward transformation sensitivity to $p_c$. Each plot represents different ground truth $p_c$.Y-axis the L1 distance between T and $T_{r}$. X-axis the $p_c$ for the backward transformation. Blue is the transformation results Orange the ``naive" solution.  }
    \label{fig:stabilityBack}
\end{center}
\vskip -0.2in
\end{figure}


\subsubsection{Robustness under iid misspecification}\label{SuppIidMisspecification}
Lets now describe the simplest algorithm  F = $Count(*)$ a function that count all the transitions in $\mathcal{O}$ and normalize to probability, as we know this is the maximum likelihood estimator for T with a Dirichlet prior. We define $T_{P} = Count(\Phi_{\Psi}(\mathcal{O}))$ and $T_{M0}$ as the transition matrix of the original Markov chain \\
As we showed before : \\

\begin{equation}\label{eq1}
\begin{split}
    T_{P} \propto p_c^2 \cdot T_{M_0} + p_c^2 \cdot \sum_{n=2}^{N} (1-p_c)^{n-1} \cdot T^n_{M_0} 
\end{split}
\end{equation}
And for ${\hat{\Phi}}$ : 
\begin{equation}
\begin{split}
    T_{\hat{P}} \propto P(R_s) \cdot P(R_s|R_s) \cdot T_{M_0} + P(R_s) \cdot P(R_s|R_s) \cdot P(R_s|R_s) \cdot \sum_{n=2}^{N} (P(R_s|R_s)^{n-2} \cdot T^n_{M_0} 
    \\= p_c \cdot (p_c + \epsilon) \cdot T_{M_0} + p_c \cdot (p_c - \epsilon) \cdot ((1-p_c) - \epsilon) \cdot \sum_{n=2}^{N} ((1-p_c) + \epsilon)^{n-2} \cdot T^n_{M_0} 
    \\ = T_{M_0}[p_c \cdot (p_c+\epsilon) \cdot I + p_c \cdot (p_c-\epsilon) \cdot \frac{1-p_c-\epsilon}{1-p_c + \epsilon} \cdot [\sum_{n=0}^N(1-p_c+\epsilon)^nT_{M0}^n - I]]
    \\ \xrightarrow[N]{\inf} T_{M_0} \cdot p_c \cdot (p_c+\epsilon) \cdot [I + \frac{(p_c-\epsilon)}{(p_c+\epsilon)} \cdot \frac{1-p_c-\epsilon}{1-p_c + \epsilon} \cdot [[I-(1-p_c+\epsilon) \cdot T_{M0}]^{-1} - I]]
\end{split}
\end{equation}
While the normalize expression is
\begin{equation}
\begin{split}
    T_{M_0} \cdot (p_c+\epsilon) \cdot [I + \frac{(p_c-\epsilon)}{(p_c+\epsilon)} \cdot \frac{1-p_c-\epsilon}{1-p_c +\epsilon} \cdot [[I-(1-p_c+\epsilon) \cdot T_{M0}]^{-1} - I]]\\
T_{M_0} \cdot (p_c+\epsilon) \cdot [I + \frac{(p_c-p_c^2+\epsilon^2) - \epsilon}{(p_c-p_c^2+\epsilon^2) + \epsilon} \cdot [[I-(1-p_c+\epsilon) \cdot T_{M0}]^{-1} - I]]
\end{split}
\end{equation}
\\
As we can see the difference lay in the ratio between the n-steps matrix and the original one. For $\epsilon \ll p_c$ we get : 
\begin{equation}
\begin{split}
    T_{M_0} \cdot (p_c+\epsilon) \cdot [I +[[I-(1-p_c+\epsilon) \cdot T_{M0}]^{-1} - I]]
\\= T_{M_0} \cdot (p_c+\epsilon) \cdot [I-(1-p_c+\epsilon) \cdot T_{M0}]^{-1}
\end{split}
\end{equation}
Now lets investigate the robustness of our analytic reconstruction by evaluating the expected results given the new data distribution :
\begin{equation}
    \begin{split}
        T_{reconstructed} = [I \cdot p_c + (1-p_c) \cdot T_{\hat{P}}]^{-1} \cdot T_{\hat{P}} \equiv  [I \cdot (1-p_c) + p_c \cdot T_{\hat{P}}^{-1}]^{-1} 
        \\ = [I \cdot (1-p_c) + \frac{p_c}{p_c + \epsilon} \cdot T_{M_0}^{-1} \cdot [I-(1-p_c+\epsilon)] \cdot T_{M_0}]]^{-1}
        \\ = [I \cdot (1-p_c) + \frac{p_c}{p_c + \epsilon} \cdot [T_{M_0}^{-1}-(1-p_c+\epsilon) \cdot I]]^{-1}
        \\ = [I \cdot \frac{\epsilon \cdot (1-2 \cdot p_c)}{p_c+ \epsilon} + \frac{p_c}{p_c+ \epsilon} \cdot T_{M_0}^{-1}]^{-1}
    \end{split}
\end{equation}
We can see that for $\epsilon \ll (1-p_c),p_c$ $T_{reconstruction} \approx T_{M_0}$.

\section{Evaluations Models}\label{Supp:EvaluationsModel}
In experiment \ref{fig:robuNonConsPC} the values of $p_c$ vary between sentences, hence, as $\sigma$ increase, the \emph{Naive} results present the relative effect of "fairly known" sentences in comparison to "badly known" ones. As all cases show, the benefit of "good" sentences is bigger than the disadvantage of "bad" ones. That is, non-constant $p_c$ is an advantage to the \emph{Naive} algorithm, but not for ours, resulting in diminishing advantage.
Notice that on closer look, not all cases are affected equally, especially, in the "Part-Of-Speech"(POS) case the difference is more noticeable. We believe that the reasons for this are: 1) relatively short distance of the POS transition matrix from its stationary distribution. 2) similarity between each entry in the transition matrix to the stationary distribution. That is, not only $T$ is similar to $T^d \xrightarrow[\infty]{d} \pi$ the fixed stationary distribution, it converges to $\pi$ for smaller d's. So, the effect of increasing d (i.e increasing $p_c$) is diminishing faster. Figure \ref{fig:supp:distStats} in the supplementary material compares the distance from stationarity for the different cases.  

Figure \ref{fig:supp:transHeat} presents heat-maps of the transitions matrices for the four models used for evaluation.
\begin{figure}[ht]
\vskip 0.2in
\begin{center}
    \includegraphics[scale=0.3]{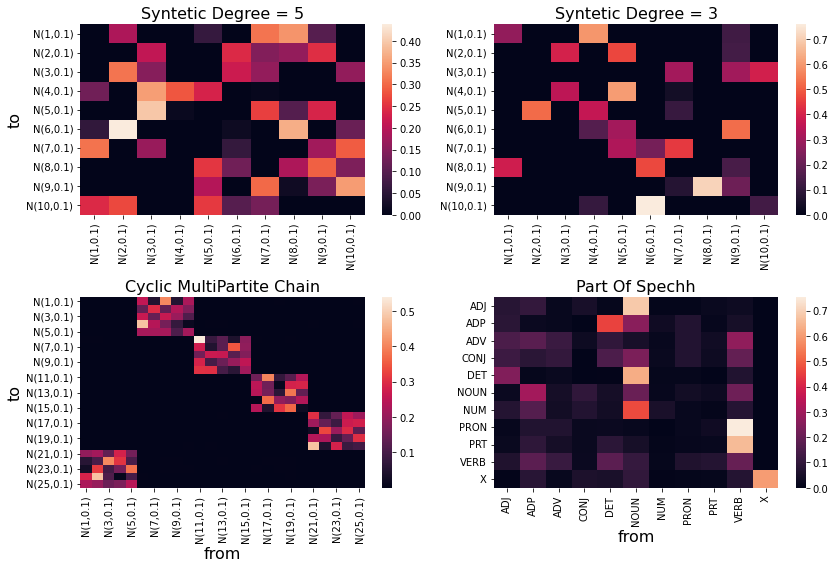}
    \caption{Heat maps of the transitions matrices of each model used in the paper.}
    \label{fig:supp:transHeat}
\end{center}
\vskip -0.2in
\end{figure}

Figure \ref{fig:supp:distStats} presents the distance to stationary distribution for each model .
\begin{figure}[ht]
\vskip 0.2in
\begin{center}
    \includegraphics[scale=0.3]{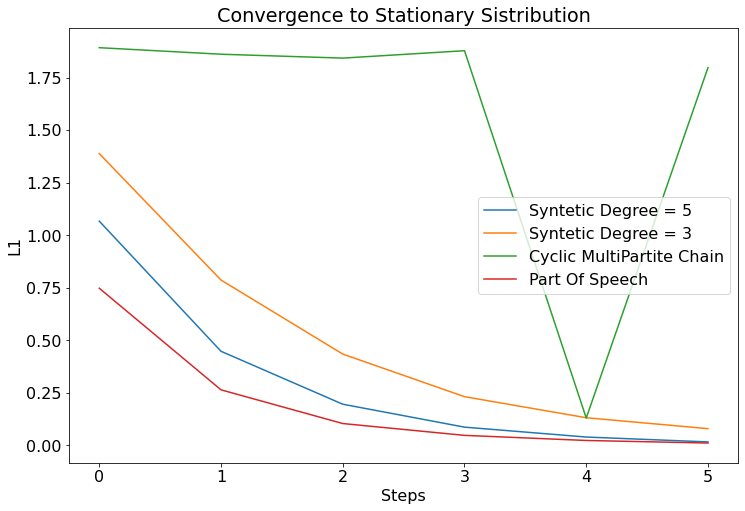}
    \caption{Distance to stationary distribution for each model used in the paper. X-label is the the number of steps for the d-step matrix $T^d$. Y-axis is the L1 distance between $T^d$ and $T$}
    \label{fig:supp:distStats}
\end{center}
\vskip -0.2in
\end{figure}

\section{Standard Deviation (S.D) of the Experiments Results}\label{Supp:SD}
For each figure in the paper, all the algorithms presented in the figure are evaluated on the same exact data and for the same random seed. The standard deviations(s.d) reported here are calculated as follow: 
for each figure (i.e "Known Specifications", "Wrong $p_c$", etc.), for each data model (i.e "Synthetic D=5", "Part Of Speech", etc. ), we picked one representative algorithm (most relevant to the figure) and parameters (most challenging), and reported the s.d between different data and seeds. 

All the s.d evaluations are presented in Table \ref{Table:SD}.

\begin{table}[t]
    \centering
    \caption{S.D for paper experiments}
    \label{Table:SD}
    \begin{tabular}{c|c}
\begin{tabular}{lllllr} 
\toprule
                    Model &                  Experiment & Algorithm & Algorithm Input &      Real Parameter &    S.D \\
             Syntetic D=5 & Ignore Missing Observations &     Naive &      $\emptyset$&         $\emptyset$ & 0.0604 \\
             Syntetic D=5 &                   Known $W$ &       HMM &             $W$ &                 $W$ & 0.0048 \\
             Syntetic D=5 &                 Known $p_c$ &     FOHMM &     $p_c = 0.5$ &         $p_c = 0.5$ & 0.0048 \\
             Syntetic D=5 &                   Known $N$ &     FOHMM &     $N   = 0.5$ &            $N = 80$ & 0.0231 \\
             Syntetic D=5 &                 Wrong $p_c$ &     FOHMM &     $p_c = 0.7$ &         $p_c = 0.5$ & 0.0117 \\
             Syntetic D=5 &          Non-constent $p_c$ &     FOHMM &     $p_c = 0.5$ & $p_c \sim N(.5,.2)$ & 0.0084 \\
             Syntetic D=5 &               Non-iid $p_c$ &     FOHMM &     $p_c = 0.5$ &   $\epsilon = 0.15$ & 0.0276 \\
             Syntetic D=3 & Ignore Missing Observations &     Naive & $\emptyset$     &   $\emptyset$       & 0.0450 \\
             Syntetic D=3 &                   Known $W$ &       HMM &             $W$ &                 $W$ & 0.0285 \\
             Syntetic D=3 &                 Known $p_c$ &     FOHMM &     $p_c = 0.5$ &         $p_c = 0.5$ & 0.0481 \\
             Syntetic D=3 &                   Known $N$ &     FOHMM &     $N   = 0.5$ &            $N = 80$ & 0.0346 \\
             Syntetic D=3 &                 Wrong $p_c$ &     FOHMM &     $p_c = 0.7$ &         $p_c = 0.5$ & 0.0599 \\
             Syntetic D=3 &          Non-constent $p_c$ &     FOHMM &     $p_c = 0.5$ & $p_c \sim N(.5,.2)$ & 0.0509 \\
             Syntetic D=3 &               Non-iid $p_c$ &     FOHMM &     $p_c = 0.5$ &   $\epsilon = 0.15$ & 0.1302 \\
                      POS & Ignore Missing Observations &     Naive &  $\emptyset$    &    $\emptyset$      & 0.0423 \\
                      POS &                   Known $W$ &       HMM &             $W$ &                 $W$ & 0.0162 \\
                      POS &                 Known $p_c$ &     FOHMM &     $p_c = 0.5$ &         $p_c = 0.5$ & 0.0079 \\
                      POS &                   Known $N$ &     FOHMM &     $N   = 0.5$ &            $N = 80$ & 0.0107 \\
                      POS &                 Wrong $p_c$ &     FOHMM &     $p_c = 0.7$ &         $p_c = 0.5$ & 0.0089 \\
                      POS &          Non-constent $p_c$ &     FOHMM &     $p_c = 0.5$ & $p_c \sim$ N(.5,.2) & 0.0091 \\
                      POS &               Non-iid $p_c$ &     FOHMM &     $p_c = 0.5$ &   $\epsilon = 0.15$ & 0.0116 \\
Cyclic MultiPartite Chain & Ignore Missing Observations &     Naive & $\emptyset$     & $\emptyset$         & 0.0094 \\
Cyclic MultiPartite Chain &                   Known $W$ &       HMM &             $W$ &                 $W$ & 0.0083 \\
Cyclic MultiPartite Chain &                 Known $p_c$ &     FOHMM &     $p_c = 0.5$ &         $p_c = 0.5$ & 0.0091 \\
Cyclic MultiPartite Chain &                   Known $N$ &     FOHMM &     $N   = 0.5$ &            $N = 80$ & 0.0144 \\
Cyclic MultiPartite Chain &                 Wrong $p_c$ &     FOHMM &     $p_c = 0.7$ &         $p_c = 0.5$ & 0.0067 \\
Cyclic MultiPartite Chain &          Non-constent $p_c$ &     FOHMM &     $p_c = 0.5$ & $p_c \sim$ N(.5,.2) & 0.0072 \\
Cyclic MultiPartite Chain &               Non-iid $p_c$ &     FOHMM &     $p_c = 0.5$ &   $\epsilon = 0.15$ & 0.0051 \\
\bottomrule
\end{tabular}
    \end{tabular}
\end{table}
\end{document}